\definecolor{Gray}{gray}{0.85}
\newcommand{\mymid}{\,|\,} 
\setlist[itemize]{leftmargin=1.5em}
\setlist[enumerate]{leftmargin=1.5em}
\numberwithin{equation}{section}
\definecolor{yxc}{RGB}{255,0,0}
\definecolor{yjc}{RGB}{125,0,0}
\definecolor{cm}{RGB}{0,0,200}
\definecolor{yly}{RGB}{0,150,0}
\begin{document}
\theoremstyle{plain} \newtheorem{lemma}{\textbf{Lemma}} \newtheorem{prop}{\textbf{Proposition}}\newtheorem{theorem}{\textbf{Theorem}}\setcounter{theorem}{0}
\newtheorem{corollary}{\textbf{Corollary}} \newtheorem{assumption}{\textbf{Assumption}}
\newtheorem{example}{\textbf{Example}} \newtheorem{definition}{\textbf{Definition}}
\newtheorem{fact}{\textbf{Fact}} \newtheorem{condition}{\textbf{Condition}}\theoremstyle{definition}

\theoremstyle{remark}\newtheorem{remark}{\textbf{Remark}}\newtheorem{claim}{\textbf{Claim}}\newtheorem{conjecture}{\textbf{Conjecture}}
\title{Adaptivity and Convergence of Probability Flow ODEs in Diffusion Generative
Models}
\author{Jiaqi Tang\thanks{Department of Statistics, University of Wisconsin-Madison, Madison,
WI 07302, USA; Email: \texttt{tang274@wisc.edu}.}\and Yuling Yan\thanks{Department of Statistics, University of Wisconsin-Madison, Madison,
WI 07302, USA; Email: \texttt{yuling.yan@wisc.edu}.}}

\maketitle
\begin{abstract}
Score-based generative models, which transform noise into data by
learning to reverse a diffusion process, have become a cornerstone
of modern generative AI. This paper contributes to establishing theoretical
guarantees for the probability flow ODE, a widely used diffusion-based
sampler known for its practical efficiency. While a number of prior
works address its general convergence theory, it remains unclear whether
the probability flow ODE sampler can adapt to the low-dimensional
structures commonly present in natural image data. We demonstrate
that, with accurate score function estimation, the probability flow
ODE sampler achieves a convergence rate of $O(k/T)$ in total variation
distance (ignoring logarithmic factors), where $k$ is the intrinsic
dimension of the target distribution and $T$ is the number of iterations.
This dimension-free convergence rate improves upon existing results
that scale with the typically much larger ambient dimension, highlighting
the ability of the probability flow ODE sampler to exploit intrinsic
low-dimensional structures in the target distribution for faster sampling.
\end{abstract}

\noindent\textbf{Keywords:} score-based generative model, diffusion
model, probability flow ODE, low-dimensional structures

\setcounter{tocdepth}{2}\tableofcontents{}

\section{Introduction\protect\label{sec:intro}}

Diffusion models have emerged as a groundbreaking advancement in the
realm of generative models, achieving state-of-the-art performance
in producing high-quality, diverse data across a wide range of applications,
including image synthesis \citep{rombach2022high,ramesh2022hierarchical,saharia2022photorealistic},
audio generation \citep{kong2021diffwave}, video generation \citep{villegas2022phenaki},
and molecular modeling \citep{hoogeboom2022equivariant}. These models,
often known as score-based generative models (SGMs), operate by simulating
a diffusion process that progressively corrupts data into noise, followed
by learning a reverse process to reconstruct samples from the target
distribution. Their success lies in their ability to effectively model
complex, high-dimensional data distributions while offering a scalable
and flexible framework for training and sampling. For comprehensive
overviews of recent developments in this field, see e.g.,  \citet{yang2022diffusion,croitoru2023diffusion}.

SGMs are built upon two stochastic processes in $\mathbb{R}^{d}$,
called a forward process and a reverse process. The forward process
progressively corrupts data drawn from the target distribution $p_{\mathsf{data}}$
into noise, transforming it as
\[
X_{0}\rightarrow X_{1}\rightarrow\cdots\rightarrow X_{T},
\]
where $X_{0}\sim p_{\mathsf{data}}$ and $X_{T}$ approximates Gaussian
noise. The reverse process then reconstructs the data by iteratively
refining Gaussian noise $Y_{T}$ into a sample $Y_{0}$, such that
the distribution of $Y_{0}$ closely resembles $p_{\mathsf{data}}$: 

\[
Y_{T}\rightarrow Y_{T-1}\rightarrow\cdots\rightarrow Y_{0},
\]
To ensure accurate sampling from the target distribution, the reverse
process is designed to align the marginal distributions of $Y_{t}$
and $X_{t}$ at each step. Inspired by classical results on time-reversal
of stochastic differential equations (SDEs) \citep{anderson1982reverse,haussmann1986time},
SGMs construct the reverse process using the gradients of the log
marginal density of the forward process, known as (Stein) score functions.
Specifically, each $Y_{t-1}$ is generated from $Y_{t}$ with the
aid of the score function $\nabla\log p_{X_{t}}(\cdot)$, where $p_{X_{t}}$
denotes the density of $X_{t}$. Two mainstream SGMs, Denoising Diffusion
Probabilistic Model (DDPM) \citep{ho2020denoising} and Denoising Diffusion
Implicit Models (DDIM) \citep{song2020denoising}, operate under this
framework. Since the exact score functions are typically unavailable,
they are approximated by neural networks trained via score-matching
techniques \citep{hyvarinen2005estimation,hyvarinen2007some,vincent2011connection,song2019generative}. 

In recent years, substantial progress has been made in establishing
general convergence results for both DDPM- and DDIM-type samplers
\citep{benton2023linear,chen2023restoration,benton2023error,huang2024convergence,li2024unified,li2023towards,chen2022sampling,chen2023probability,chen2022improved,tang2024score,li2024sharp,li2024d,gao2024convergence}. This line of research treats the score matching as a black box procedure, and mainly focus on understanding how number of iterations $T$ and score matching error affect the convergence rate. These general theories aim to accommodate the broadest possible class
of target distributions without imposing conditions like smoothness or log-concavity. Assuming accurate score function estimation,
the best-known convergence rates for these samplers under the total variation (TV) distance are $O(d/T)$,
where $d$ is the ambient data dimension and $T$ is the number of
steps \citep{li2024d,li2024sharp}. However, such general results often
fail to reflect the practical performance of these samplers, particularly
in scenarios where the target data distribution exhibits low-dimensional
structure. 

Empirical evidence suggests that natural image distributions are often
concentrated on or near low-dimensional manifolds embedded in higher-dimensional spaces \citep{pope2021intrinsic,simoncelli2001natural}. This has motivated
a line of recent research investigating the adaptivity of DDPM-type samplers
to these intrinsic structures \citep{azangulov2024convergence,wang2024diffusion,tang2024adaptivity,tang2024conditional,chen2023score,li2024adapting,li2024d,huang2024denoising,potaptchik2024linear,oko2023diffusion}.
A notable finding related to this paper is that, with carefully chosen
coefficients \citep{li2024adapting}, the convergence rate of the DDPM sampler scales with the intrinsic dimension $k$ instead of the ambient dimension $d$, demonstrating its adaptivity to unknown low-dimensional structures.

Despite these advances, it remains unclear whether deterministic counterparts,
such as the probability flow ODE sampler \citep{song2020score} (a case of DDIM-type sampler \citep{song2020denoising}), exhibit similar adaptivity to unknown low-dimensional structures. This question is particularly important as deterministic samplers often converge much faster in practice than their stochastic counterparts. Understanding the theoretical
underpinnings of deterministic samplers in the presence of low-dimensional structures is thus a critical research challenge, and is the main focus of this paper. 

\paragraph{Our contributions.}

In this work, we address this question by providing new theoretical
guarantees for the probability flow ODE sampler.
Specifically, we show that with appropriate coefficient design, the
TV distance between the target and generated distributions
is upper bounded by $O(k/T)$ up to some logarithmic factors and additive
score matching errors, where $k$ is the intrinsic dimension of the
support of $p_{\mathsf{data}}$ defined using the notion of metric entropy. This result bridges the gap between
theory and practice, demonstrating that the probability flow ODE sampler
can automatically adapt to intrinsic low-dimensional structures for
faster convergence. Additionally, our analysis introduces a streamlined
framework that is more concise than existing analysis for the probability
flow ODE \citep{li2024sharp}.

Finally, while finalizing this work, we became aware of an independent
study \citet{liang2025low}, posted on arXiv on January 22, 2025,
also tackles this problem. A detailed comparison between our results
and theirs is provided in Section~\ref{sec:main-results}. 

\section{Set-up}

In this section, we review key components of SGMs and the probability
flow ODE sampler, providing a formal description of the model.

\paragraph{Forward process.}

We consider a Markov process starting from $X_{0}\sim p_{\mathsf{data}}$,
the target data distribution, and proceeding as:
\begin{equation}
X_{t}=\sqrt{1-\beta_{t}}X_{t-1}+\sqrt{\beta_{t}}W_{t}\quad(t=1,\ldots,T),\label{eq:forward-update}
\end{equation}
where $W_{1},\ldots,W_{T}$ are i.i.d.~samples from $\mathcal{N}(0,I_{d})$,
and $\beta_{t}\in(0,1)$ are step-dependent learning rates. Following
prior theoretical work on diffusion models \citep{li2023towards,li2024adapting,li2024d},
we adopt the following learning rate schedule:
\begin{equation}
\beta_{1}=\frac{1}{T^{c_{0}}},\quad\beta_{t+1}=\frac{c_{1}\log T}{T}\min\Big\{\beta_{1}\Big(1+\frac{c_{1}\log T}{T}\Big)^{t},1\Big\}\quad(t=1,\ldots,T-1),\label{eq:learning-rate}
\end{equation}
where $c_{0},c_{1}>0$ are sufficiently large constants. For convenience,
we define $\alpha_{t}\coloneqq1-\beta_{t}$ and $\overline{\alpha}_{t}\coloneqq\prod_{i=1}^{t}\alpha_{i}$
for $1\leq t\leq T$. Using these, $X_{t}$ can be expressed as:
\begin{equation}
X_{t}=\sqrt{\overline{\alpha}_{t}}X_{0}+\sqrt{1-\overline{\alpha}_{t}}\,\overline{W}_{t},\label{eq:forward-formula}
\end{equation}
where $\overline{W}_{t}\sim\mathcal{N}(0,I_{d})$ is independent of
$X_{0}$. The learning rate schedule (\ref{eq:learning-rate}) ensures
that $\beta_{t}=O(\log T/T)$ is small across the forward process,
and $\overline{\alpha}_{T}$ becomes vanishingly small, making $X_{T}$
close to $\mathcal{N}(0,I_{d})$ in distribution. Any schedule satisfying
the conditions in Lemma~\ref{lem:T1} can be used to achieve the
same results presented in this paper.

\paragraph{Score functions and score estimates.}

The reverse process reconstructs samples from $p_{\mathsf{data}}$
by inverting the forward diffusion. This relies on the (Stein) score
functions, which are the gradients of the log-density of the marginal
distributions of the forward process:
\[
s_{t}^{\star}(x)\coloneqq\nabla\log p_{X_{t}}(x)\quad(t=1,\ldots,T),
\]
where $p_{X_{t}}(\cdot)$ is the density function of $X_{t}$. Since
the exact score functions are typically unknown, they are approximated
by learned score estimates $s_{t}:\mathbb{R}^{d}\to\mathbb{R}^{d}$.
To quantify the error in these approximations, we define: \begin{subequations}\label{subeq:score-error}
\begin{align}
\varepsilon_{\mathsf{score},t}(x)^{2} & \coloneqq\big\| s_{t}(x)-s_{t}^{\star}(x)\big\|_{2}^{2}+\big[(s_{t}(x)-s_{t}^{\star}(x))^{\top}s_{t}^{\star}(x)\big]^{2},\quad\text{and}\label{eq:score-error-entrywise}\\
\varepsilon_{\mathsf{Jacobi},t}(x)^{2} & \coloneqq\mathsf{Tr}\big(J_{s_{t}}(x)-J_{s_{t}^{\star}}(x)\big)^{2}+\big\| J_{s_{t}}(x)-J_{s_{t}^{\star}}(x)\big\|_{\mathrm{F}}^{2},\label{eq:Jacob-error-entrywise}
\end{align}
\end{subequations}where $J_{s_{t}^{\star}}$ and $J_{s_{t}}$ denote
the Jacobian matrices of $s_{t}^{\star}$ and $s_{t}$, respectively.
As it turns out, our theoretical guarantees depends on the average
score and Jacobian errors as follows:
\begin{equation}
\varepsilon_{\mathsf{score}}^{2}\coloneqq\frac{1}{T}\sum_{t=1}^{T}\mathbb{E}\big[\varepsilon_{\mathsf{score},t}(X_{t})^{2}\big],\quad\text{and}\quad\varepsilon_{\mathsf{Jacobi}}^{2}\coloneqq\frac{1}{T}\sum_{t=1}^{T}\mathbb{E}\big[\varepsilon_{\mathsf{Jacobi},t}(X_{t})^{2}\big].\label{eq:score-error}
\end{equation}
These two terms quantifies the average $\ell_{2}$ score estimation
error between the true and learned score functions $s_{t}^{\star}(x)$
and $s_{t}(x)$, and their Jacobian matrices $J_{s_{t}^{\star}}(x)$
and $J_{s_{t}}(x)$ respectively. 

\paragraph{Probability flow ODE.}

The probability flow ODE sampler \citep{song2020score} is a special case of the DDIM-type samplers
\citep{song2020denoising} that constructs the reverse process
deterministically. Starting from $Y_{T}\sim\mathcal{N}(0,I_{d})$,
it evolves as:
\begin{equation}
Y_{t-1}=\frac{1}{\sqrt{\alpha_{t}}}\big(Y_{t}+\eta_{t}s_{t}(Y_{t})\big)\quad(t=T,\ldots,1),\label{eq:prob-ode}
\end{equation}
where $\eta_{t}>0$ are the coefficients ensuring alignment between
the forward process (\ref{eq:forward-update}) and the reverse process.
Various choices of $\eta_{t}$ exist in the literature. For example,
\cite{li2023towards,li2024sharp} adopts the simple choice $\eta_{t}=(1-\alpha_{t})/2$.
In this paper, we adopt the coefficient design proposed in the original
DDIM paper \citep{song2020denoising}:
\begin{equation}
\eta_{t}^{\star}=1-\overline{\alpha}_{t}-\sqrt{(1-\overline{\alpha}_{t})(\alpha_{t}-\overline{\alpha}_{t})}.\label{eq:coefficient-design}
\end{equation}
This design enables the probability flow ODE sampler to achieve nearly
dimension-free convergence and adapt to unknown low-dimensional structures
in the target data distribution.

\paragraph*{Notation.}

The total variation (TV) distance between two probability distributions
$P$ and $Q$ on the same probability space $(\Omega,\mathcal{F})$ is define
as 
\begin{equation}
\mathsf{TV}(P,Q)\coloneqq \sup_{A\in\mathcal{F}}\vert P(A)-Q(A)\vert = \frac{1}{2}\int_{\Omega}\vert p(x)-q(x)\vert\mathrm{d}x,\label{eq:TV-defn}
\end{equation}
where the last relation holds when $P$ and $Q$ have density functions $p(x)$
and $q(x)$ with respect to the Lebesgue measure. For any matrix $A$,
we use $\Vert A\Vert$, $\Vert A\Vert_{\mathrm{F}}$, and $\mathsf{Tr}(A)$
to denote its spectral (operator) norm, Frobenius norm, and trace
(if $A$ is a square matrix), respectively. The support of $p_{\mathsf{data}}$,
denoted $\mathcal{X}\subseteq\mathbb{R}^{d}$, is the smallest closed
set $C\subseteq\mathbb{R}^{d}$ such that $p_{\mathsf{data}}(C)=1$. 

\section{Main results \protect\label{sec:main-results}}

In this section, we present our theoretical guarantees for the probability
flow ODE sampler. Before doing so, we introduce some necessary assumptions.
First, note that image data are inherently bounded, as pixel values
are typically represented within a fixed range, such as $[0,1]$ after
normalization. This bounded nature ensures that all valid images lie
within a compact subset of the data space. Motivated by this observation,
we impose the following assumption on the boundedness of the support
$\mathcal{X}=\mathsf{supp}(p_{\mathsf{data}})$ of the target distribution.

\begin{assumption}\label{assumption:bounded}There exists a universal
constant $c_{R}>0$ such that the radius of $\mathcal{X}$ is bounded
by $T^{c_{R}}$, namely:
\begin{equation}
\sup_{x\in\mathcal{X}}\left\Vert x\right\Vert _{2}\le T^{c_{R}}\label{Assumption=000020bounded}
\end{equation}
\end{assumption}

Since $c_{R}$ can be any fixed constant, regardless of how large,
this assumption allows the magnitude of the data to be large, making
it mild and practical. Moreover, it is a standard assumption in the
literature on diffusion model theory (e.g., \citet{huang2024convergence,li2024accelerating,li2024sharp}).
Next, following \citet{li2024adapting,huang2024denoising}, we define
the intrinsic dimension of $\mathcal{X}$ using the concept of covering
numbers or metric entropy.

\begin{definition}[Intrinsic dimension] \label{defn:intrinsic}Fix
$\varepsilon=T^{-c_{\varepsilon}}$ for some sufficiently large constant
$c_{\varepsilon}>0$.\footnote{The choice $\varepsilon=T^{-c_{\varepsilon}}$ simplifies notation and is not essential. In general, one can use any sufficiently small $\varepsilon>0$, replacing the $\log T$ term in \eqref{eq:assumption-low} with $\log(1/\varepsilon)$, though this introduces an additional parameter $\varepsilon$ in the final results.}
The intrinsic dimension of $\mathcal{X}$ is defined as the quantity
$k>0$ such that:
\begin{equation}
\log N_{\varepsilon}(\mathcal{X})\le C_{\mathsf{cover}}k\log T\label{eq:assumption-low}
\end{equation}
holds for some universal constant $C_{\mathsf{cover}}>0$. Here $N_{\varepsilon}(\mathcal{X})$
denotes the $\varepsilon$-covering number of the set $\mathcal{X}$
(see e.g., \cite[Definition 4.2.2]{vershynin2018high}). \end{definition}

This definition of intrinsic dimension is closely tied to the metric
entropy \citep{wainwright2019high}, which provides a natural measure
of a set\textquoteright s complexity in a metric space. Importantly,
it accommodates approximate low-dimensional structures, such as low-dimensional
manifolds, and is therefore more general than assuming that $\mathcal{X}$
resides in a linear subspace of $\mathbb{R}^{d}$. For a detailed
justification of this definition, see the discussion in \citet{huang2024denoising}.

With these assumptions in place, we are now ready to present our convergence
theory for the probability flow ODE sampler.

\begin{theorem}\label{thm:main} Consider the probability flow ODE
sampler (\ref{eq:prob-ode}) and take $\eta_{t}=\eta_{t}^{\star}$
(cf.~(\ref{eq:coefficient-design})). Then there exists some universal
constant $c>0$ such that 
\begin{equation}
\mathsf{TV}(p_{X_{1}},p_{Y_{1}})\leq c\frac{(k+\log d)\log^{3}T}{T}+c(\varepsilon_{\mathsf{score}}+\varepsilon_{\mathsf{Jacobi}})\log T,\label{eq:error-bound-low-d}
\end{equation}
where $k$ is the intrinsic dimension of $\mathcal{X}$ (see Definition~\ref{defn:intrinsic}).
\end{theorem}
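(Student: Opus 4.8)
The plan is to bound the TV distance by tracking, along the reverse trajectory, the discrepancy between the law of $Y_t$ produced by the ODE map and the law $p_{X_t}$ of the forward marginal. I would first set up a discrete-time change-of-variables argument: since \eqref{eq:prob-ode} with the exact score $s_t^\star$ defines a deterministic map $\Phi_t^\star(y) = \tfrac{1}{\sqrt{\alpha_t}}(y + \eta_t^\star s_t^\star(y))$, I would push the measure $p_{X_t}$ through $\Phi_t^\star$ and compare with $p_{X_{t-1}}$, controlling the single-step density ratio via the log-Jacobian $\log\det J_{\Phi_t^\star}$. The coefficient choice $\eta_t^\star$ in \eqref{eq:coefficient-design} is exactly the one that makes the linearized (Gaussian) update exact, so the leading-order terms cancel and the per-step error is a genuinely second-order quantity. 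I would write $p_{X_{t-1}}/(\,(\Phi_t^\star)_\# p_{X_t}) - 1$ in terms of the score $s_t^\star$, its Jacobian $J_{s_t^\star}$, and the step sizes $\eta_t^\star = O(\log T/T)$, then accumulate across $t=1,\dots,T$.

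The key quantitative inputs are pointwise and in-expectation bounds on the true score and its derivatives under $p_{X_t}$. Using the representation \eqref{eq:forward-formula}, $s_t^\star(x) = \tfrac{1}{\sqrt{1-\overline\alpha_t}}\,\mathbb{E}[\,-\overline W_t \mid X_t = x\,]$ and $J_{s_t^\star}(x)$ is $-\tfrac{1}{1-\overline\alpha_t}I_d$ plus a conditional-covariance term, so $\|s_t^\star(X_t)\|_2$ and $\|J_{s_t^\star}(X_t)\|$ concentrate with scale governed by $1-\overline\alpha_t$ and the radius bound in Assumption~\ref{assumption:bounded}. This is where the intrinsic dimension enters: the terms that would naively produce a factor $d$ are of the form $\mathsf{Tr}(\mathrm{Cov}[X_0 \mid X_t])$ or $\mathbb{E}\|X_t - \sqrt{\overline\alpha_t}X_0\|$-type quantities, and a covering-number argument (Definition~\ref{defn:intrinsic}) shows the conditional law of $X_0$ given $X_t$ is, with overwhelming probability, concentrated on an $\varepsilon$-ball around a single covering point, so the effective trace scales like $k\log T$ rather than $d$. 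I would isolate a "good event" on which $X_t$ is within $O(\sqrt{(1-\overline\alpha_t)\log T})$ of $\sqrt{\overline\alpha_t}\,\mathcal{X}$ and the posterior is localized; on the complement the Gaussian tail contributes only $\mathrm{poly}(T)^{-1}$, absorbed by the $\log d$ slack.

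Concretely, the steps in order: (i) establish the step-size facts and the closeness of $p_{X_T}$ to $\mathcal{N}(0,I_d)$ via Lemma~\ref{lem:T1}, contributing an $O(1/\mathrm{poly}(T))$ term to the TV bound; (ii) prove the score/Jacobian moment bounds, using the covering argument to get the $k$-dependence in $\mathbb{E}\|s_t^\star(X_t)\|_2^2$, $\mathbb{E}\|J_{s_t^\star}(X_t)\|^2$, and higher-order terms; (iii) do the exact one-step density comparison $p_{X_{t-1}}$ vs.\ $(\Phi_t^\star)_\# p_{X_t}$, Taylor-expanding the log-ratio and showing the first-order terms vanish by the choice of $\eta_t^\star$, leaving a second-order remainder of size $O((\log T/T)^2 \cdot (\text{score/Jacobi moments}))$; (iv) telescope over $t$, using a data-processing / triangle-type inequality for TV along the trajectory, to get the $O((k+\log d)\log^3 T / T)$ term; (v) swap $s_t^\star$ for $s_t$, bounding the extra discrepancy by $\varepsilon_{\mathsf{score}} + \varepsilon_{\mathsf{Jacobi}}$ times $\log T$ through Pinsker/TV-stability of the pushforward under perturbation of the map, using precisely the two error functionals defined in \eqref{subeq:score-error}–\eqref{eq:score-error}. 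The main obstacle I anticipate is step (iii)–(iv): controlling the accumulation of second-order terms without losing a factor of $d$ requires the Taylor remainder to be expressed using only $\mathsf{Tr}$ and Frobenius-type quantities of $J_{s_t^\star}$ (not operator norms times $d$), and the telescoping must be done at the level of densities/log-ratios rather than naive TV additions, since a crude union bound over $T$ steps each of TV-error $1/T$ would only give $O(1)$. Making the localization of the posterior quantitative enough to beat the ambient dimension uniformly over all $t$ — including the delicate regime where $1-\overline\alpha_t$ is neither tiny nor close to $1$ — is the crux.
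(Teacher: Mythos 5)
Your plan follows essentially the same route as the paper: a per-step comparison of $p_{X_{t-1}}$ with the pushforward of $p_{X_t}$ under the ODE map via the log-Jacobian, exact cancellation of the first-order terms thanks to the choice $\eta_t^\star$ in (\ref{eq:coefficient-design}), a covering-number localization of the posterior $p_{X_0\mid X_t}$ so that the conditional-covariance traces scale like $k\log T$ rather than $d$, and telescoping at the level of density ratios along the deterministic trajectory. Steps (i)--(iv) match the paper's Lemmas~\ref{lem:T1}, \ref{lem-T}, \ref{lem:1} and the accumulation argument in Section~\ref{sec:proof-outline}.

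The one genuinely under-specified piece is your step (v) together with the handling of the ``bad event.'' First, the score error cannot be cleanly separated as a post-hoc ``TV-stability of the pushforward under perturbation of the map'': the paper folds $\varepsilon_{\mathsf{score},t}$ and $\varepsilon_{\mathsf{Jacobi},t}$ directly into the per-step density-ratio lower bound (Lemma~\ref{lem:1}), where they enter multiplied by $1-\alpha_t = O(\log T/T)$ and sum to $(\varepsilon_{\mathsf{score}}+\varepsilon_{\mathsf{Jacobi}})\log T$ by Cauchy--Schwarz. Second, and more importantly, these errors are controlled only \emph{in expectation under $p_{X_t}$}, whereas the reverse trajectory visits points distributed as $p_{Y_t}$; transferring the expectation requires the very density-ratio bound $p_{Y_t}\le e^2 p_{X_t}$ you are trying to establish. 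This circularity is broken in the paper by a stopping-time-style partition (the sets $\mathcal{E}_t$ in (\ref{eq:=000020epsilon_t}) and the decomposition (\ref{eq:partition})): the ratio bound is propagated only up to the first step at which either the trajectory leaves the typical set $\mathcal{T}_t$ or the accumulated per-step error $S_{t}(y_T)$ exceeds $1$, and the total $p_{Y_T}$-mass of trajectories that ever do so is bounded separately (the $I_2$ term, Lemma~\ref{lem:i2}) using the ratio bound valid before the exit time. Your proposal's remark that the complement ``contributes only $\mathrm{poly}(T)^{-1}$'' is true for the Gaussian-tail part but not for the score-error part, which is only $O(\varepsilon\log T)$; without this layered argument the telescoping does not close.
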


The proof of Theorem~\ref{thm:main} is provided in Section~\ref{sec:proof-outline}.
Below, we discuss several implications of the result.
\begin{itemize}
\item \textit{Adaptivity to unknown low-dimensional structures.} Assume
access to perfect score estimation (i.e., $\varepsilon_{\mathsf{score}}=\varepsilon_{\mathsf{Jacobi}}=0$)
and ignore any logarithmic factors. Theorem~\ref{thm:main} establishes
that, with the coefficient design in (\ref{eq:coefficient-design}),
the probability flow ODE sampler achieves a convergence rate of $O(k/T)$
in TV distance. This demonstrates its capability for automatic adaptation
to unknown low-dimensional structures and dimension-free convergence.
Compared to the existing convergence theory in \citet{li2024sharp},
which yields a rate of $O(d/T)$, our result generalizes this to an
adaptive rate of $O(k/T)$ (notice that $k\lesssim d$ always holds).
Additionally, our result does not require the stringent assumption
$T\gtrsim d^{2}$ that is necessary in \citet{li2024sharp}.
\item \textit{Stability against score matching error.} The error bound (\ref{eq:error-bound-low-d})
includes two score matching error terms: $\varepsilon_{\mathsf{score}}$
and $\varepsilon_{\mathsf{Jacobi}}$. The term $\varepsilon_{\mathsf{score}}$
is standard and frequently appears in the diffusion model literature
(e.g.,~\citet{chen2022sampling,chen2022improved,li2023towards,li2024d,benton2023linear}).
However, as \citet{li2024sharp} demonstrated with a counterexample,
controlling $\varepsilon_{\mathsf{score}}$ alone is insufficient
to ensure convergence of the probability flow ODE sampler in TV distance;
the stricter Jacobian error $\varepsilon_{\mathsf{Jacobi}}$ also
need to be controlled. Importantly, our theory shows that both error
terms degrade gracefully as the accuracy of the learned score functions
decreases.
\end{itemize}
Finally, we compare our results with the concurrent work of \citet{liang2025low},
which analyzed the convergence of both stochastic and deterministic
samplers in the presence of low-dimensional structures. Their convergence
rate for the probability flow ODE sampler, stated in their Theorem
1, can be interpreted as:
\begin{equation}
\mathsf{TV}(p_{X_{1}},p_{Y_{1}})\leq c\frac{k\log^{3}T}{T}+c(\varepsilon_{\mathsf{score}}+\varepsilon_{\mathsf{Jacobi}}+\varepsilon_{\mathsf{Hess}})\sqrt{\log T},\label{eq:liang-et-al-bound}
\end{equation}
where $\varepsilon_{\mathsf{score}}$ and $\varepsilon_{\mathsf{Jacobi}}$
are defined similarly to our formulation (\ref{eq:score-error}),
except that their definition of $\varepsilon_{\mathsf{score},t}(x)$
does not include the inner product term $\big[(s_{t}(x)-s_{t}^{\star}(x))^{\top}s_{t}^{\star}(x)\big]^{2}$.
The additional term, $\varepsilon_{\mathsf{Hess}}$, measures the
score estimation error with respect to the Hessian matrix and is defined
as:
\[
\varepsilon_{\mathsf{Hess}}^{2}\coloneqq\frac{1}{T}\sum_{t=1}^{T}\mathbb{E}\big[\big\Vert\nabla\mathsf{Tr}\big(J_{s_{t}}(X_{t})-J_{s_{t}^{\star}}(X_{t})\big)\Vert_{2}^{2}\big].
\]
Under perfect score estimation, both our theory and \citet{liang2025low}
achieve the same convergence rate of $O(k/T)$. However, their analysis
requires $\varepsilon_{\mathsf{Hess}}$ to be small, implying that
the score functions must be accurately estimated up to their second
order derivatives, a condition not required by our theory. It is worth
noting that \citet{liang2025low} also studied the DDPM sampler, which
is beyond the scope of this paper.

\section{Proof of Theorem~\ref{thm:main}\protect\label{sec:proof-outline}}

In this section, we present the proof of Theorem~\ref{thm:main}.
We begin by recalling the update rule of the probability flow ODE
sampler (\ref{eq:prob-ode}) under the coefficient design (\ref{eq:coefficient-design}),
given by
\begin{equation}
Y_{t-1}=\frac{1}{\sqrt{\alpha_{t}}}\phi_{t}(Y_{t}):=\frac{1}{\sqrt{\alpha_{t}}}\big(Y_{t}+\eta_{t}^{\star}s_{t}(Y_{t})\big),\label{eq:updated=000020rule}
\end{equation}
where $\eta_{t}^{\star}=1-\overline{\alpha}_{t}-\sqrt{(1-\overline{\alpha}_{t})(\alpha_{t}-\overline{\alpha}_{t})}$
and the function $\phi_{t}(x)=x+\eta_{t}^{\star}s_{t}(x)$. Notably,
in the desired bound (\ref{eq:error-bound-low-d}), the terms $k$
and $\log d$ appear as a sum. Therefore, without loss of generality,
we assume $k\geq\log d$ throughout the proof. 

\subsection{Step 1: preliminaries}

Recall that $N_{\varepsilon}$ is the $\varepsilon$-covering number
of $\mathcal{X}=\mathsf{supp}(p_{\mathsf{data}})$. Our analysis requires
that $\varepsilon$ is sufficiently small, i.e.,
\begin{equation}
\varepsilon\ll\sqrt{\frac{1-\overline{\alpha}_{t}}{\overline{\alpha}_{t}}}\min\left\{ 1,\sqrt{\frac{k\log T}{d}}\right\} .\label{eq:eps-condition}
\end{equation}
Let $\{x_{i}^{\star}\}_{1\leq i\leq N_{\varepsilon}}$ be an $\varepsilon$-net
of $\mathcal{\mathcal{X}}$, and let $\{\mathcal{B}_{i}\}_{1\leq i\leq N_{\varepsilon}}$
be a disjoint $\varepsilon$-cover for $\mathcal{X}$ such that $x_{i}^{\star}\in\mathcal{B}_{i}$
(see e.g., \citet{vershynin2018high} for the definition). Following
the construction in \citet{li2024adapting}, we define the following
sets \begin{subequations}
\begin{align}
\mathcal{I} & \coloneqq\left\{ 1\leq i\leq N_{\varepsilon}:\mathbb{P}(X_{0}\in\mathcal{B}_{i})\geq\exp(-C_{1}k\log T)\right\} , \quad \text{and}\\
\mathcal{G} & \coloneqq\big\{\omega\in\mathbb{R}^{d}:\Vert\omega\Vert_{2}\leq2\sqrt{d}+\sqrt{C_{1}k\log T},\\
 & \qquad\qquad\qquad\vert(x_{i}^{\star}-x_{j}^{\star})^{\top}\omega\vert\leq\sqrt{C_{1}k\log T}\Vert x_{i}^{\star}-x_{j}^{\star}\Vert_{2},\,\,\forall\,1\leq i,j\leq N_{\varepsilon}\big\},
\end{align}
\end{subequations}where $C_{1}>0$ is some sufficiently large constant.
For each $1\leq t\leq T$, we define 
\begin{equation}
\mathcal{T}_{t}\coloneqq\left\{ \sqrt{\overline{\alpha}_{t}}x_{0}+\sqrt{1-\overline{\alpha}_{t}}\omega:x_{0}\in\cup_{i\in\mathcal{I}}\mathcal{B}_{i},\omega\in\mathcal{G}\right\} .\label{eq:=000020typical=000020set}
\end{equation}
In fact, $\mathcal{T}_{t}$ can be viewed as a high-probability set
for $X_{t}$. This is because $X_{t}$ can be expressed as $\sqrt{\overline{\alpha}_{t}}X_{0}+\sqrt{1-\overline{\alpha}_{t}}\,\overline{W}_{t}$
(see (\ref{eq:forward-formula})), and $\cup_{i\in\mathcal{I}}\mathcal{B}_{i}$
and $\mathcal{G}$ can be interpreted as high probability sets for
$X_{0}$ and $\overline{W}_{t}$, respectively. The following lemma
makes rigorous this intuition.

\begin{lemma} \label{lem:0} Suppose that $T$ is sufficient large
and that $c_{2}\gg2c_{R}+1$. Then for each $1\leq t\leq T,$ we have
\begin{equation}
\mathbb{P}\left(X_{t}\notin\mathcal{T}_{t}\right)\leq\exp\left(-\frac{C_{1}}{4}k\log T\right).\label{eq:0-1}
\end{equation}
In addition, we also have
\begin{equation}
\mathcal{\mathbb{P}}\left(Y_{T}\notin\mathcal{T}_{T}\right)\le T^{-99}+\exp\Big(-\frac{C_{1}}{4}k\log T\Big).\label{eq:0-2}
\end{equation}

\end{lemma}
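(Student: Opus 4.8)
The plan is to prove both bounds by exhibiting explicit high-probability events and applying standard Gaussian concentration. For \eqref{eq:0-1}, I would first observe, using the representation $X_t=\sqrt{\overline\alpha_t}X_0+\sqrt{1-\overline\alpha_t}\,\overline W_t$ from \eqref{eq:forward-formula}, that if $X_0\in\cup_{i\in\mathcal I}\mathcal B_i$ and $\overline W_t\in\mathcal G$ then $X_t$ has exactly the form appearing in the definition \eqref{eq:=000020typical=000020set} of $\mathcal T_t$, hence $X_t\in\mathcal T_t$. Taking complements gives
\[
\mathbb P(X_t\notin\mathcal T_t)\le\mathbb P\big(X_0\notin\textstyle\cup_{i\in\mathcal I}\mathcal B_i\big)+\mathbb P\big(\overline W_t\notin\mathcal G\big),
\]
so it suffices to bound the two terms. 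Since $\{\mathcal B_i\}$ cover $\mathcal X=\mathsf{supp}(p_{\mathsf{data}})$ and are disjoint and $X_0\sim p_{\mathsf{data}}$, the first term equals $\sum_{i\notin\mathcal I}\mathbb P(X_0\in\mathcal B_i)\le N_\varepsilon\exp(-C_1k\log T)\le\exp\big((C_{\mathsf{cover}}-C_1)k\log T\big)$ by the definition of $\mathcal I$ and Definition~\ref{defn:intrinsic}; in particular this is $<1$ once $C_1>C_{\mathsf{cover}}$, which also shows $\mathcal I\neq\emptyset$ so that $\mathcal T_t$ is nonempty.

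For the Gaussian term I would bound separately the failure probabilities of the two constraints defining $\mathcal G$. The norm constraint fails with probability at most $\exp(-C_1k\log T/2)$ by the tail bound $\mathbb P(\|\overline W_t\|_2\ge\sqrt d+s)\le e^{-s^2/2}$ taken with $s=\sqrt{C_1k\log T}$ (and $\sqrt d+\sqrt{C_1k\log T}\le 2\sqrt d+\sqrt{C_1k\log T}$). For the linear constraints, for each fixed $i,j$ the variable $(x_i^\star-x_j^\star)^\top\overline W_t$ is $\mathcal N(0,\|x_i^\star-x_j^\star\|_2^2)$, so it exceeds $\sqrt{C_1k\log T}\,\|x_i^\star-x_j^\star\|_2$ in absolute value with probability at most $2e^{-C_1k\log T/2}$; a union bound over the at most $N_\varepsilon^2$ pairs gives $2N_\varepsilon^2e^{-C_1k\log T/2}\le 2\exp\big((2C_{\mathsf{cover}}-C_1/2)k\log T\big)$. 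Adding the three contributions and choosing $C_1$ sufficiently large relative to $C_{\mathsf{cover}}$ (so that, e.g., $C_1/2-2C_{\mathsf{cover}}\ge C_1/4$ and the leading numerical constants are absorbed for $T$ large) yields \eqref{eq:0-1}.

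For \eqref{eq:0-2} I would avoid redoing the computation and instead compare $Y_T$ with $X_T$: since $Y_T\sim\mathcal N(0,I_d)$, applying the TV bound to the event $\mathcal T_T^c$ gives $\mathbb P(Y_T\notin\mathcal T_T)\le\mathbb P(X_T\notin\mathcal T_T)+\mathsf{TV}(p_{X_T},p_{Y_T})$. The first term is at most $\exp(-C_1k\log T/4)$ by \eqref{eq:0-1}. For the second, the learning-rate schedule \eqref{eq:learning-rate} — via Lemma~\ref{lem:T1} together with $c_2\gg2c_R+1$ — forces $\overline\alpha_T$ to be so small that $p_{X_T}$, the law of $\sqrt{\overline\alpha_T}X_0+\sqrt{1-\overline\alpha_T}\,\overline W_T$ with $\|X_0\|_2\le T^{c_R}$ by Assumption~\ref{assumption:bounded}, is within $T^{-99}$ of $\mathcal N(0,I_d)$ in total variation (one can condition on $X_0$, use convexity of $\mathsf{TV}$ and Pinsker to get a mean-shift bound of order $\sqrt{\overline\alpha_T}\,T^{c_R}$, plus a variance-mismatch term of order $\overline\alpha_T\sqrt d$, both $\le T^{-99}$ for the schedule at hand). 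Combining the two bounds gives \eqref{eq:0-2}.

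The main obstacle is the calibration of constants in the Gaussian step: the union bound over the $N_\varepsilon^2\le\exp(2C_{\mathsf{cover}}k\log T)$ pairwise inner-product constraints must be dominated by the Gaussian tail $\exp(-C_1k\log T/2)$, which is precisely why $\mathcal G$ is defined with threshold $\sqrt{C_1k\log T}$ and why $C_1$ must be taken large relative to $C_{\mathsf{cover}}$ — and why the statement carries the factor $1/4$ rather than $1/2$, leaving slack to absorb the numerical prefactors once $k\log T$ is large. Everything else is routine: the event decomposition is immediate from \eqref{eq:forward-formula} and \eqref{eq:=000020typical=000020set}, and the bound for $Y_T$ only reuses \eqref{eq:0-1} together with the already-available smallness of $\overline\alpha_T$.
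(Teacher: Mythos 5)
Your proposal is correct and follows essentially the same route as the paper. For \eqref{eq:0-2} your argument is identical to the paper's: bound $\mathbb{P}(Y_T\notin\mathcal{T}_T)$ by $\mathbb{P}(X_T\notin\mathcal{T}_T)+\mathsf{TV}(p_{X_T},p_{Y_T})$ and invoke \eqref{eq:0-1} together with the $T^{-99}$ bound on the total variation distance (the paper simply cites Lemma~\ref{lemma:T3} for the latter rather than re-deriving it via Pinsker as you sketch). For \eqref{eq:0-1} the paper defers entirely to \citet[Lemma~1]{li2024adapting}; your self-contained argument --- decomposing the failure event via \eqref{eq:forward-formula} into $X_0\notin\cup_{i\in\mathcal{I}}\mathcal{B}_i$ and $\overline{W}_t\notin\mathcal{G}$, and handling the three pieces by the definition of $\mathcal{I}$, the Gaussian norm tail, and a union bound over the $N_\varepsilon^2$ inner-product constraints with $C_1\gg C_{\mathsf{cover}}$ --- is the standard proof of that cited lemma and is sound.
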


\begin{proof} See Appendix~\ref{PF=0000200}.

\end{proof}

\subsection{Step 2: bounding the density ratio {\normalsize\textmd{$p_{Y_{t}}(y_{t})/p_{X_{t}}(y_{t})$}}}

For any $y_{T}\in\mathbb{R}^{d}$, we generate the trajectory $\{y_{T-1},y_{T-2}\dots,y_{1}\}$
from the deterministic reverse process (cf. \eqref{eq:updated=000020rule})
initialized from $y_{T}$:
\begin{equation}
y_{t-1}=\frac{1}{\sqrt{\alpha_{t}}}\phi_{t}(y_{t}),\qquad\text{for}\ t=T,\dots,2.\label{eq:trajectory}
\end{equation}
We aim to control the density ratio $\ensuremath{p_{Y_{t}}(y_{t})/p_{X_{t}}(y_{t})}$
when $y_{T}$ belongs to a certain set. First, as outlined in the
lemma below, $p_{Y_{T}}(y_{T})/p_{X_{T}}(y_{T})$ can be bounded up
to a universal constant whenever $y_{T}\in\mathcal{T}_{T}$. 

\begin{lemma} \label{lem-T} Suppose that $c_{2}\gg2c_{R}+1$. Then
for any $y_{T}\in\mathcal{T}_{T}$, we have
\[
p_{Y_{T}}(y_{T})\le e\cdot p_{X_{T}}(y_{T}).
\]

\end{lemma}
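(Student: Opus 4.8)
The plan is to compare the densities of $Y_T$ and $X_T$ directly, using the fact that $Y_T\sim\mathcal N(0,I_d)$ by construction while $X_T\sim\sqrt{\overline\alpha_T}X_0+\sqrt{1-\overline\alpha_T}\overline W_T$ is a Gaussian mixture whose components are close to $\mathcal N(0,I_d)$ because $\overline\alpha_T$ is vanishingly small. Concretely, writing $p_{X_T}(y)=\mathbb E_{X_0}\big[(2\pi(1-\overline\alpha_T))^{-d/2}\exp(-\|y-\sqrt{\overline\alpha_T}X_0\|_2^2/(2(1-\overline\alpha_T)))\big]$ and $p_{Y_T}(y)=(2\pi)^{-d/2}\exp(-\|y\|_2^2/2)$, the ratio $p_{Y_T}(y)/p_{X_T}(y)$ is controlled once we lower bound each exponent in the mixture. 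First I would expand the squared norm in the exponent of a generic mixture component: $\|y-\sqrt{\overline\alpha_T}x_0\|_2^2=\|y\|_2^2-2\sqrt{\overline\alpha_T}\,x_0^\top y+\overline\alpha_T\|x_0\|_2^2$, so that dividing through by $2(1-\overline\alpha_T)$ and comparing with $\|y\|_2^2/2$ produces a correction term of the form $\tfrac{\overline\alpha_T}{2(1-\overline\alpha_T)}\|y\|_2^2-\tfrac{\sqrt{\overline\alpha_T}}{1-\overline\alpha_T}x_0^\top y+\tfrac{\overline\alpha_T}{2(1-\overline\alpha_T)}\|x_0\|_2^2$ plus the $\log$ of the normalizing-constant ratio $(1-\overline\alpha_T)^{-d/2}$.

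The key step is then to bound these correction terms for $y=y_T\in\mathcal T_T$. By the definition \eqref{eq:=000020typical=000020set} of $\mathcal T_T$, we may write $y_T=\sqrt{\overline\alpha_T}x_0'+\sqrt{1-\overline\alpha_T}\,\omega$ with $x_0'\in\cup_{i\in\mathcal I}\mathcal B_i$ and $\omega\in\mathcal G$; in particular $\|\omega\|_2\le 2\sqrt d+\sqrt{C_1 k\log T}$ and $\|x_0'\|_2\le T^{c_R}$ by Assumption~\ref{assumption:bounded}, which gives $\|y_T\|_2\lesssim\sqrt d+\sqrt{k\log T}+\sqrt{\overline\alpha_T}\,T^{c_R}$. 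Since the learning-rate schedule \eqref{eq:learning-rate} forces $\overline\alpha_T\le T^{-c_2}$ for a constant $c_2\gg 2c_R+1$ that we are free to take large, the term $\sqrt{\overline\alpha_T}\,T^{c_R}$ is at most $T^{-1}$, and more importantly $\tfrac{\overline\alpha_T}{2(1-\overline\alpha_T)}\|y_T\|_2^2\lesssim T^{-c_2}(d+k\log T+T^{2c_R})\le \tfrac14$, $\tfrac{\overline\alpha_T}{2(1-\overline\alpha_T)}\|x_0\|_2^2\lesssim T^{-c_2}T^{2c_R}\le\tfrac14$, and the cross term obeys $|\tfrac{\sqrt{\overline\alpha_T}}{1-\overline\alpha_T}x_0^\top y_T|\le\tfrac{\sqrt{\overline\alpha_T}}{1-\overline\alpha_T}\|x_0\|_2\|y_T\|_2\lesssim T^{-c_2/2}T^{c_R}(\sqrt d+\sqrt{k\log T}+T^{-1})\le\tfrac14$, all by taking $c_2$ large enough relative to $c_R$ and using $T$ large. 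Finally the normalizer ratio contributes $-\tfrac d2\log(1-\overline\alpha_T)\le\tfrac d2\cdot\tfrac{2\overline\alpha_T}{1-\overline\alpha_T}\lesssim dT^{-c_2}\le\tfrac14$ for $T$ large. Summing, each component of the mixture has exponent within an additive $1$ of $-\|y_T\|_2^2/2$ after accounting for the normalizer, hence $p_{X_T}(y_T)\ge e^{-1}p_{Y_T}(y_T)$, i.e. $p_{Y_T}(y_T)\le e\cdot p_{X_T}(y_T)$, uniformly over the mixture and hence after taking the expectation over $X_0$.

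I expect the only real subtlety to be the bookkeeping that ensures all four correction terms are simultaneously small: one must verify that a single choice of $c_2$ (depending only on $c_R$, and large) kills the $T^{c_R}$-type blow-ups in every term at once, while the $d$- and $k\log T$-type contributions are handled purely by the factor $\overline\alpha_T\le T^{-c_2}$ being polynomially small in $T$ and by $d,k\le T^{O(1)}$ implicitly (or, if one does not want to assume $d=T^{O(1)}$, by noting $d\,T^{-c_2}\le\tfrac14$ holds whenever $T$ is large for the regime of interest, or by absorbing a $\log d$ which is already permitted in the theorem). Everything else is a routine Gaussian-density computation, so the proof is short; the lemma is essentially the statement that the reverse-process initialization error at time $T$ is negligible on the typical set $\mathcal T_T$.
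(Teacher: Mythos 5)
Your proposal matches the paper's proof essentially step for step: both compare the standard Gaussian density of $Y_T$ with the Gaussian-mixture density of $X_T$, expand $\|y_T-\sqrt{\overline\alpha_T}x_0\|_2^2-(1-\overline\alpha_T)\|y_T\|_2^2$, and control the resulting correction terms using $\overline\alpha_T\le T^{-c_2}$, $\|x_0\|_2\le T^{c_R}$ from Assumption~\ref{assumption:bounded}, and the bound on $\|y_T\|_2$ coming from the definition of $\mathcal T_T$. One tiny slip: the normalizing-constant ratio in $p_{Y_T}/p_{X_T}$ is $(1-\overline\alpha_T)^{+d/2}\le 1$, not $(1-\overline\alpha_T)^{-d/2}$, so it can simply be dropped (as the paper does) rather than bounded — your treatment of that term is merely conservative and does not affect the conclusion.
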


\begin{proof} See Appendix~\ref{PF=000020T}.\end{proof}

Next, to establish a connection between $p_{Y_{t}}(y_{t})/p_{X_{t}}(y_{t})$
and $p_{Y_{T}}(y_{T})/p_{X_{T}}(y_{T})$, we introduce the following
key lemma. This lemma reveals the connection between the density ratios
$p_{Y_{t}}(y_{t})/p_{X_{t}}(y_{t})$ and $p_{Y_{t-1}}(y_{t-1})/p_{X_{t-1}}(y_{t-1})$
for any $y_{t}\in\mathcal{T}_{t}$.

\begin{lemma} \label{lem:1} There exists some sufficient large constant
$C_{5}\ge64$ such that, for any $2\leq t\leq T$ and $x\in\mathcal{T}_{t}$
satisfying
\[
\frac{1-\alpha_{t}}{1-\overline{\alpha}_{t}}\varepsilon_{\mathsf{score},t}(x)\le1,\quad(1-\alpha_{t})\varepsilon_{\mathsf{Jacobi},t}(x)\le\frac{1}{8},\quad\text{and}\quad\left\Vert \sqrt{\frac{1-\overline{\alpha}_{t}}{\alpha_{t}-\overline{\alpha}_{t}}}\frac{\partial\phi_{t}^{\star}(x)}{\partial x}-I\right\Vert \le\frac{1}{8},
\]
 we have
\begin{align*}
 & \frac{p_{\sqrt{\alpha_{t}}X_{t-1}}(\phi_{t}(x))}{p_{X_{t}}(x)}/\frac{p_{\phi_{t}(Y_{t})}(\phi_{t}(x))}{p_{Y_{t}}(x)}\\
 & \qquad\ge\exp\bigg\{-C_{5}\bigg(\frac{k\log^{3}T}{T^{2}}+(1-\alpha_{t})\big[\varepsilon_{\mathsf{Jacobi},t}(x)+\varepsilon_{\mathsf{score},t}(x)\big]+\bigg\Vert\sqrt{\frac{1-\overline{\alpha}_{t}}{\alpha_{t}-\overline{\alpha}_{t}}}\frac{\partial\phi_{t}^{\star}(x)}{\partial x}-I\bigg\Vert_{\mathrm{F}}^{2}\bigg)\bigg\}.
\end{align*}
Here, the function $\phi_{t}^{\star}(x)$ is defined as
\begin{equation}
\phi_{t}^{\star}(x)=x+\eta_{t}^{\star}s_{t}^{*}(x)=x-\frac{\eta_{t}^{\star}}{1-\overline{\alpha}_{t}}\int_{x_{0}}\big(x-\sqrt{\overline{\alpha}_{t}}x_{0}\big)p_{X_{0}\mymid X_{t}}(x_{0}\mymid x)\mathrm{d}x_{0},\label{eq:phi}
\end{equation}
and the functions $\varepsilon_{\mathsf{Jacobi},t}(x)$ and $\varepsilon_{\mathsf{score},t}(x)$
are defined in (\ref{subeq:score-error}). 

\end{lemma}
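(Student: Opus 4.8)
The plan is to compare the two density ratios by tracking how densities transform under the maps $\phi_t$ (empirical reverse step) and $\phi_t^\star$ (population reverse step), and to exploit the change-of-variables formula. Concretely, by the change-of-variables formula for pushforward densities,
\[
p_{\phi_t(Y_t)}(\phi_t(x)) = \frac{p_{Y_t}(x)}{\bigl|\det J_{\phi_t}(x)\bigr|},
\qquad
p_{\phi_t^\star(X_t)}(\phi_t^\star(x)) = \frac{p_{X_t}(x)}{\bigl|\det J_{\phi_t^\star}(x)\bigr|},
\]
so the right-hand ratio $p_{\phi_t(Y_t)}(\phi_t(x))/p_{Y_t}(x)$ equals $1/|\det J_{\phi_t}(x)|$. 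For the left-hand numerator, the key structural fact (which underlies the whole probability-flow-ODE analysis, and is essentially the statement that $\phi_t^\star$ pushes $p_{X_t}$ forward to $p_{\sqrt{\alpha_t}X_{t-1}}$ up to a controlled error) is that $\sqrt{\alpha_t}X_{t-1}$ and $X_t$ have densities related through $\phi_t^\star$ with a Gaussian-convolution correction. I would therefore first write $p_{\sqrt{\alpha_t}X_{t-1}}(\phi_t(x))/p_{X_t}(x)$ as a product of three pieces: (i) the Jacobian factor $1/|\det J_{\phi_t^\star}(x)|$, (ii) a ``transport mismatch'' factor comparing $p_{X_t}\circ(\phi_t^\star)^{-1}$ evaluated at $\phi_t(x)$ versus at $\phi_t^\star(x)$ — i.e.\ the effect of replacing the learned map by the population map — and (iii) a genuinely probabilistic factor quantifying how far $\sqrt{\alpha_t}X_{t-1}$ is from the pushforward $(\phi_t^\star)_\sharp p_{X_t}$, which is where the $k\log^3 T/T^2$ term enters (this is the ``second-order'' error in the ODE discretization on the typical set).

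The ratio of the two density ratios then telescopes into: a determinant ratio $|\det J_{\phi_t}(x)| / |\det J_{\phi_t^\star}(x)|$, the transport-mismatch factor, and the probabilistic factor. For the determinant ratio I would use $\log|\det M| = \mathsf{Tr}\log M$ together with the hypotheses $(1-\alpha_t)\varepsilon_{\mathsf{Jacobi},t}(x)\le 1/8$ and $\|\sqrt{(1-\overline\alpha_t)/(\alpha_t-\overline\alpha_t)}\,\partial_x\phi_t^\star(x) - I\|\le 1/8$: the first controls $\|J_{\phi_t}(x)-J_{\phi_t^\star}(x)\|$ in operator and Frobenius norm (since $\eta_t^\star \asymp (1-\overline\alpha_t)$ and $J_{\phi_t}-J_{\phi_t^\star} = \eta_t^\star(J_{s_t}-J_{s_t^\star})$), the second keeps $J_{\phi_t^\star}(x)$ well-conditioned; then $\bigl|\log|\det J_{\phi_t}| - \log|\det J_{\phi_t^\star}|\bigr| = \bigl|\mathsf{Tr}\log(J_{\phi_t}J_{\phi_t^\star}^{-1})\bigr|$ is bounded, via a first-order Taylor expansion of $\log$ with second-order remainder, by $C\,(1-\alpha_t)\varepsilon_{\mathsf{Jacobi},t}(x)$ plus a quadratic term absorbed into the stated $\|\cdot\|_{\mathrm F}^2$ contribution. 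The transport-mismatch factor I would bound by a Taylor expansion of $\log p_{X_t}\circ(\phi_t^\star)^{-1}$ along the segment from $\phi_t^\star(x)$ to $\phi_t(x)$; the displacement is $\phi_t(x)-\phi_t^\star(x) = \eta_t^\star(s_t(x)-s_t^\star(x))$, of size $\lesssim (1-\overline\alpha_t)\|s_t(x)-s_t^\star(x)\|_2$, and the relevant directional derivative of $\log p_{X_t}$ involves $s_t^\star(x)$ and $J_{s_t^\star}(x)$ — this is exactly why $\varepsilon_{\mathsf{score},t}$ is defined in \eqref{eq:score-error-entrywise} to include the $[(s_t-s_t^\star)^\top s_t^\star]^2$ term, so that the first-order term $\eta_t^\star (s_t(x)-s_t^\star(x))^\top s_t^\star(x)$ is exactly what $(1-\alpha_t)\varepsilon_{\mathsf{score},t}(x)$ (modulo the $(1-\overline\alpha_t)$ vs.\ $(1-\alpha_t)$ normalization, which I would need to check matches the factor $(1-\alpha_t)$ in the claimed bound) controls, while the second-order term costs $\|J_{s_t^\star}(x)\|$ times the squared displacement, which on the typical set $\mathcal{T}_t$ is under control since $\|\nabla^2\log p_{X_t}\| \lesssim 1/(1-\overline\alpha_t)$ there.

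The main obstacle will be the probabilistic factor (iii): establishing that, for $x$ in the typical set $\mathcal{T}_t$, the density of $\sqrt{\alpha_t}X_{t-1}$ at $\phi_t^\star(x)$ matches $p_{X_t}(x)/|\det J_{\phi_t^\star}(x)|$ up to a multiplicative factor $\exp(-O(k\log^3 T/T^2))$. This is the heart of the probability-flow-ODE analysis: one writes $X_{t-1}$ in terms of $X_t$ via the reverse conditional, uses $\sqrt{\alpha_t}X_{t-1} = X_t - \sqrt{1-\overline\alpha_t}\,\overline W_t + \sqrt{\alpha_t-\overline\alpha_t}\,\overline W_{t-1}'$-type decompositions, and must show that the discrepancy between the one-step ODE map and the true conditional mean is second-order in $\beta_t = O(\log T/T)$, with the dimension entering only through $k$ (not $d$) because on $\mathcal{T}_t$ the conditional law $p_{X_0|X_t}$ concentrates on a set of effective dimension $k$ — this is where Definition~\ref{defn:intrinsic}, the set $\mathcal{I}$, and the localization from Lemma~\ref{lem:0} do the real work. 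I would handle it by conditioning on $X_0$ lying in $\cup_{i\in\mathcal I}\mathcal B_i$, Taylor-expanding the Gaussian conditional density of $X_t$ given $X_{t-1}$ to second order, and carefully bounding the remainder using that $\|x - \sqrt{\overline\alpha_t}x_0\|_2 \lesssim \sqrt{(1-\overline\alpha_t)(d + k\log T)}$-type bounds on $\mathcal T_t$ get replaced, after the concentration argument, by bounds scaling with $k\log T$; the $\log^3 T$ in the final rate reflects three factors of $\log T$ from $\beta_t^2 \asymp (\log T/T)^2$ combined with the covering-number budget. Everything else is Taylor expansion and norm bookkeeping under the three explicit smallness hypotheses.
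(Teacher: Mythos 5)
Your high-level plan identifies the right ingredients (the reverse-channel ratio is exactly a Jacobian determinant; the forward-channel ratio must match it up to $\exp(-O(k\log^3T/T^2))$ plus score/Jacobian errors), but the central quantitative step --- your factor (iii), that $(\phi_t^\star)_\sharp p_{X_t}$ matches $p_{\sqrt{\alpha_t}X_{t-1}}$ to multiplicative accuracy $\exp(-O(k\log^3 T/T^2))$ on $\mathcal{T}_t$ --- is asserted rather than proven, and it is essentially the entire content of the lemma. The mechanism that actually delivers it is a specific cancellation your sketch does not surface: writing $p_{\sqrt{\alpha_t}X_{t-1}}(\phi_t(x))/p_{X_t}(x)$ as a posterior-weighted integral and exactly decomposing the Gaussian exponent, one obtains (after Jensen) the term $-\frac{(1-\alpha_t)\overline{\alpha}_t}{2(\alpha_t-\overline{\alpha}_t)(1-\overline{\alpha}_t)}\int p_{X_0|X_t}(x_0\mid x)\|x_0-\widehat{x}_0\|_2^2\,\mathrm{d}x_0$, while $\mathsf{Tr}\big(\sqrt{(1-\overline{\alpha}_t)/(\alpha_t-\overline{\alpha}_t)}\,\partial\phi_t^\star(x)/\partial x - I\big)$ equals $\big(\sqrt{(1-\overline{\alpha}_t)/(\alpha_t-\overline{\alpha}_t)}-1\big)\frac{\overline{\alpha}_t}{1-\overline{\alpha}_t}$ times the \emph{same} conditional variance. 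These two first-order-in-$\beta_t$ quantities cancel up to a residual coefficient of order $\big(\tfrac{1-\alpha_t}{\alpha_t-\overline{\alpha}_t}\big)^2$, and only then does bounding the conditional variance by $O(k\log T)$ on $\mathcal{T}_t$ yield $k\log^3T/T^2$ per step. Without exhibiting this cancellation, each factor separately contributes a term of order $\beta_t\cdot k\log T\asymp k\log^2T/T$, which would destroy the claimed rate.

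Two further concrete problems with the route you sketch. First, $\eta_t^\star \asymp 1-\alpha_t$ (indeed $0\le \eta_t^\star-(1-\alpha_t)/2\lesssim (1-\alpha_t)\log T/T$), not $\asymp 1-\overline{\alpha}_t$ as you assert when controlling $J_{\phi_t}-J_{\phi_t^\star}=\eta_t^\star(J_{s_t}-J_{s_t^\star})$; with your scaling the hypothesis $(1-\alpha_t)\varepsilon_{\mathsf{Jacobi},t}(x)\le 1/8$ would not keep the determinant perturbation small, whereas with the correct scaling it does. Second, your transport-mismatch step Taylor-expands $\log p_{X_t}\circ(\phi_t^\star)^{-1}$ along the segment from $\phi_t^\star(x)$ to $\phi_t(x)$ and invokes $\|\nabla^2\log p_{X_t}\|\lesssim 1/(1-\overline{\alpha}_t)$ on $\mathcal{T}_t$; but $\nabla^2\log p_{X_t}(x) = -\frac{1}{1-\overline{\alpha}_t}I+\frac{\overline{\alpha}_t}{(1-\overline{\alpha}_t)^2}\mathrm{Cov}(X_0\mid X_t=x)$, and on $\mathcal{T}_t$ only the \emph{trace} of the covariance contribution is controlled, so the operator-norm bound you need carries an extra factor up to $k\log T$; moreover intermediate points of the segment need not lie in $\mathcal{T}_t$, and the decomposition also requires invertibility of $\phi_t^\star$. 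The paper's argument sidesteps all of this by never inverting $\phi_t^\star$ or Taylor-expanding the density: the score-estimation error enters exactly through an explicit term $\zeta_t(x,x_0)$ in the decomposed exponent, whose posterior average is controlled using $\int p_{X_0|X_t}(x_0\mid x)(x_0-\widehat{x}_0)\,\mathrm{d}x_0=0$ together with the inner-product component built into the definition of $\varepsilon_{\mathsf{score},t}$. You would need to either reproduce that exact computation or supply a genuinely different proof of your step (iii) before this counts as a proof.
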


\begin{proof} See Appendix~\ref{PF=0000201}.\end{proof} 

Now we define a sequence of sets $\mathcal{E}_{T},\ldots,\mathcal{E}_{2}$
as follows 
\begin{equation}
\mathcal{E}_{t}:=\big\{ y_{T}:y_{t}\in\mathcal{T}_{t},S_{t-1}(y_{T})\le1\big\}\quad\text{for}\quad2\leq t\leq T,\label{eq:=000020epsilon_t}
\end{equation}
where 
\begin{equation}
S_{t}(y_{T})\coloneqq\sum_{i>t}^{T}\xi_{i}(y_{i})\label{eq:S}
\end{equation}
with
\begin{align}
\xi_{i}(y_{i}):=C_{5}\left(\frac{k\log^{3}T}{T^{2}}+\frac{c_{1}\log T}{T}\left[\varepsilon_{\mathsf{score},i}(y_{i})+\varepsilon_{\mathsf{Jacobi},i}(y_{i})\right]+\bigg\|\sqrt{\frac{1-\overline{\alpha}_{i}}{\alpha_{i}-\overline{\alpha}_{i}}}\frac{\partial\phi_{i}^{\star}(y_{i})}{\partial x}-I\bigg\|_{\mathrm{F}}^{2}\right).\label{eq:S_t}
\end{align}
Here we should understand $y_{T-1},y_{T-2},\ldots,y_{1}$ as functions
of $y_{T}$ (as they are determined by $y_{T}$ through the deterministic
reverse process (\ref{eq:trajectory})). We can deduce that for any
$y_{T}\in\cap_{t>i}\mathcal{E}_{t}$, 
\begin{align}
\frac{p_{X_{i}}(y_{i})}{p_{Y_{i}}(y_{i})} & =\frac{p_{X_{T}}(y_{T})}{p_{Y_{T}}(y_{T})}\prod_{t=i+1}^{T}\left(\frac{p_{X_{t-1}}(y_{t-1})}{p_{X_{t}}(y_{t-1})}/\frac{p_{Y_{t-1}}(y_{t})}{p_{Y_{t}}(y_{t})}\right)\nonumber \\
 & \overset{\text{(i)}}{=}\frac{p_{X_{T}}(y_{T})}{p_{Y_{T}}(y_{T})}\prod_{t=i+1}^{T}\left(\frac{p_{\sqrt{\alpha_{t}}X_{t-1}}(\phi_{t}(y_{t}))}{p_{X_{t}}(y_{t})}/\frac{p_{\phi_{t}(Y_{t})}(\phi_{t}(y_{t}))}{p_{Y_{t}}(y_{t})}\right)\nonumber \\
 & \overset{\text{(ii)}}{\ge}\frac{p_{X_{T}}(y_{T})}{p_{Y_{T}}(y_{T})}\prod_{t=i+1}^{T}\exp\left(-\xi_{t}(y_{t})\right)\nonumber \\
 & \overset{\text{(iii)}}{=}\frac{p_{X_{T}}(y_{T})}{p_{Y_{T}}(y_{T})}\exp\left(-S_{i}(y_{T})\right)\overset{\text{(iv)}}{\ge}\frac{p_{X_{T}}(y_{T})}{p_{Y_{T}}(y_{T})}e^{-1}.\label{eq:XI/YI}
\end{align}
Here step~(i) follows from \eqref{eq:trajectory} and $p_{\sqrt{\alpha_{t}}X_{t-1}}(\sqrt{\alpha_{t}}y_{t-1})/p_{\sqrt{\alpha_{t}}Y_{t-1}}(\sqrt{\alpha_{t}}y_{t-1})=p_{X_{t-1}}(y_{t-1})/p_{Y_{t-1}}(y_{t})$;
step~(ii) follows from Lemma~\eqref{lem:1}, the definition of $\xi_{t}(y_{t})$
(cf.~\eqref{eq:S_t}), and the bound $1-\alpha_{t}\le c_{1}(\log T)/T$
(see Lemma~\ref{lem:T1}); step~(iii) uses the definition of $S_{i}(y_{T})$
(cf. \eqref{eq:S}); and step (iv) holds since $S_{i}(y_{T})\le1$
for any $y_{T}\in\cap_{t>i}\mathcal{E}_{t}$ (cf. \eqref{eq:=000020epsilon_t}).

Combining \eqref{eq:XI/YI} and Lemma \ref{lem-T}, we achieve the
following bounds on the density ratio needed for our subsequent analysis:
for any $y_{T}\in\cap_{t>j}\mathcal{E}_{t}$ such that $1\le j\le i<T$,
\begin{subequations}\label{subeq:induction}
\begin{align}
\frac{p_{X_{i}}(y_{i})}{p_{Y_{i}}(y_{i})} & \ge\frac{p_{X_{T}}(y_{T})}{p_{Y_{T}}(y_{T})}\exp\left(-S_{i}(y_{T})\right)\ge\frac{p_{X_{T}}(y_{T})}{p_{Y_{T}}(y_{T})}e^{-1},\quad\text{and}\label{eq:l1}\\
p_{Y_{i}}(y_{i}) & \le e^{2}p_{X_{i}}(y_{i}).\label{eq:l2}
\end{align}
\end{subequations}

\subsection{Step 3: bounding the TV distance}

We partition $\mathbb{R}^{d}$ into the union of disjoint sets as
follows: 
\begin{equation}
\mathbb{R}^{d}=\text{\ensuremath{\big(}}\cap_{t>1}^{T}\mathcal{E}_{t}\big)\cup\big(\cup_{i=2}^{T}\big(\mathcal{E}_{i}^{\mathrm{c}}\cap(\cap_{t>i}^{T}\mathcal{E}_{t})\big)\big).\label{eq:partition}
\end{equation}
where we define $\cap_{t>i}^{T}\mathcal{E}_{t}=\mathbb{R}^{d}$ for
$i=T$. In words, $\cap_{t>1}^{T}\mathcal{E}_{t}$ represents the
``ideal'' set where $p_{Y_{t}}(y_{t})/p_{X_{t}}(y_{t})$ can be
bounded by (\ref{subeq:induction}) for all $t\ge1$, if $y_{T}\in\cap_{t>1}^{T}\mathcal{E}_{t}$;
while for $y_{T}\in\mathcal{E}_{i}^{\mathrm{c}}\cap(\cap_{t>i}^{T}\mathcal{E}_{t})$,
such control for $p_{Y_{t}}(y_{t})/p_{X_{t}}(y_{t})$ holds for $t\geq i$
and may not hold for $t\leq i-1$. 

For simplicity, define $\mathcal{A}=\{y_{T}\in\mathbb{R}^{d}:p_{Y_{1}}(y_{1})>p_{X_{1}}(y_{1})\}$.
Motivated by the above decomposition, we can decompose the TV error
as follows: 
\begin{align}
\mathsf{TV}\big(p_{X_{1}},p_{Y_{1}}\big) & \overset{\text{(a)}}{=}\int_{y_{T}\in\mathcal{A}}\big(p_{Y_{1}}(y_{1})-p_{X_{1}}(y_{1})\big)\mathrm{d}y_{1}\nonumber \\
 & \overset{\text{(b)}}{=}\int_{y_{T}\in\mathcal{A}\cap(\cap_{t>1}^{T}\mathcal{E}_{t})}\big(p_{Y_{1}}(y_{1})-p_{X_{1}}(y_{1})\big)\mathrm{d}y_{1}+\sum_{i=2}^{T}\int_{y_{T}\in\mathcal{E}_{i}^{\mathrm{c}}\cap(\cap_{t>i}^{T}\mathcal{E}_{t})\cap\mathcal{A}}\big(p_{Y_{1}}(y_{1})-p_{X_{1}}(y_{1})\big)\mathrm{d}y_{1}\nonumber \\
 & \overset{\text{(c)}}{=}\underbrace{\mathbb{E}\left[\bigg(1-\frac{p_{X_{1}}(Y_{1})}{p_{Y_{1}}(Y_{1})}\bigg)\mathds{1}\big\{ Y_{T}\in\mathcal{A}\cap(\cap_{t>1}^{T}\mathcal{E}_{t})\big\}\right]}_{:=I_{1}}+\underbrace{\sum_{i=2}^{T}\mathbb{E}\bigg[\mathds{1}\big\{ Y_{T}\in\mathcal{E}_{i}^{\mathrm{c}}\cap(\cap_{t>i}^{T}\mathcal{E}_{t})\big\}\bigg]}_{:=I_{2}}.\label{eq:TV}
\end{align}
Here step (a) follows from the definition of TV distance (cf.~(\ref{eq:TV-defn}));
step (b) utilizes the partition (\ref{eq:partition}); while step
(c) follows from
\[
\int_{y_{T}\in\mathcal{E}_{i}^{\mathrm{c}}\cap(\cap_{t>i}^{T}\mathcal{E}_{t})\cap\mathcal{A}}\big(p_{Y_{1}}(y_{1})-p_{X_{1}}(y_{1})\big)\mathrm{d}y_{1}\le\int_{y_{T}\in\mathcal{E}_{i}^{\mathrm{c}}\cap(\cap_{t>i}^{T}\mathcal{E}_{t})}p_{Y_{1}}(y_{1})\mathrm{d}y_{1}=\mathbb{E}\Big[\mathds{1}\big\{ Y_{T}\in\mathcal{E}_{i}^{\mathrm{c}}\cap(\cap_{t>i}^{T}\mathcal{E}_{t})\big\}\Big],
\]
where the last relation holds since $Y_{1}$ is determined solely
by $Y_{T}$ through the deterministic update rule \eqref{eq:updated=000020rule}.
It then boils down to bounding $I_{1}$ and $I_{2}$.

We first derive an upper bound for $I_{1}$ as follows:
\begin{align*}
I_{1} & \overset{\text{(i)}}{\le}\mathbb{E}\left[\bigg(1-\frac{p_{X_{T}}(Y_{T})}{p_{Y_{T}}(Y_{T})}\exp\big(-S_{1}(Y_{T})\big)\bigg)\mathds{1}\big\{ Y_{T}\in\mathcal{A}\cap(\cap_{t>1}^{T}\mathcal{E}_{t})\big\}\right]\\
 & \overset{\text{(ii)}}{\le}\mathbb{E}\left[\bigg(1-\frac{p_{X_{T}}(Y_{T})}{p_{Y_{T}}(Y_{T})}\left(1-S_{1}(Y_{T})\right)\bigg)\mathds{1}\big\{ Y_{T}\in\mathcal{A}\cap(\cap_{t>1}^{T}\mathcal{E}_{t})\big\}\right]\\
 & =\underbrace{\int\left(p_{Y_{T}}(y_{T})-p_{X_{T}}(y_{T})\right)\mathds{1}\big\{ y_{T}\in\mathcal{A}\cap(\cap_{t>1}^{T}\mathcal{E}_{t})\big\}\mathrm{d}y_{T}}_{:=I_{1,1}}+\underbrace{\mathbb{E}\bigg[\mathds{1}\big\{ Y_{T}\in\mathcal{A}\cap(\cap_{t>1}^{T}\mathcal{E}_{t})\big\}\frac{p_{X_{T}}(Y_{T})}{p_{Y_{T}}(Y_{T})}\sum_{t=2}^{T}\xi_{t}(Y_{t})\bigg]}_{:=I_{1,2}}.
\end{align*}
Here step (i) follows from \eqref{eq:l1}, and step (ii) utilizes
the relation $\exp(-x)\ge1-x$. Note that throughout the above derivations,
the condition $Y_{T}\in\mathcal{A}\cap(\cap_{t>1}^{T}\mathcal{E}_{t})$
ensures that $p_{X_{1}}(Y_{1})/p_{Y_{1}}(Y_{1}),$ $p_{X_{t}}(Y_{t})/p_{Y_{t}}(Y_{t})$
and $p_{X_{T}}(Y_{T})/p_{Y_{T}}(Y_{T}$) are bounded, making them
well-defined in the expectations. In view of the definition of the
TV distance (cf. \eqref{eq:TV-defn}), we can upper bound $I_{1,1}$
as follows:
\[
I_{1,1}\le\int\left|p_{Y_{T}}(y_{T})-p_{X_{T}}(y_{T})\right|\mathrm{d}y_{T}=2\cdot\text{TV}(p_{X_{T}},p_{Y_{T}}).
\]
Regarding $I_{1,2}$, we have
\begin{align*}
I_{1,2}\overset{\text{(a)}}{\le} & \sum_{t=2}^{T}\mathbb{E}\left[e\frac{p_{X_{t}}(Y_{t})}{p_{Y_{t}}(Y_{t})}\xi_{t}(Y_{t})\mathds{1}\left\{ Y_{T}\in\mathcal{A}\cap(\cap_{i>1}^{T}\mathcal{E}_{i})\right\} \right]\\
= & \sum_{t=2}^{T}\int_{y_{T}\in\mathcal{A}\cap(\cap_{i>1}^{T}\mathcal{E}_{i})}e\frac{p_{X_{t}}(y_{t})}{p_{Y_{t}}(y_{t})}\xi_{t}(y_{t})\cdot p_{Y_{t}}(y_{t})\mathrm{d}y_{t}\le e\sum_{t=2}^{T}\mathbb{E}\left[\xi_{t}(X_{t})\right]
\end{align*}
where step (a) follows from the relation \eqref{eq:l1}. Combining
the above results, we arrive at the final bound for $I_{1}$:
\begin{equation}
I_{1}\le2\cdot\text{TV}(p_{X_{T}},p_{Y_{T}})+e\sum_{t=2}^{T}\mathbb{E}\left[\xi_{t}(X_{t})\right]\label{eq:11}
\end{equation}

To bound $I_{2},$ we need the following lemma. 

\begin{lemma}\label{lem:i2}The following relation holds true: 
\[
I_{2}\le e^{2}\sum_{t=2}^{T}\mathbb{E}\left[\xi_{t}(X_{t})\right]+e^{2}\sum_{t=2}^{T-1}\mathbb{P}\left(X_{t}\notin\mathcal{T}_{t}\right)+\mathcal{\mathbb{P}}\left(Y_{T}\notin\mathcal{T}_{T}\right).
\]

\end{lemma}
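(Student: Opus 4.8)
The plan is to start from the disjoint decomposition $\mathbb{R}^d=(\cap_{t>1}^T\mathcal{E}_t)\cup\big(\cup_{i=2}^T(\mathcal{E}_i^{\mathrm{c}}\cap(\cap_{t>i}^T\mathcal{E}_t))\big)$ recorded in \eqref{eq:partition}, so that $I_2=\sum_{i=2}^T\mathbb{P}\big(Y_T\in\mathcal{E}_i^{\mathrm{c}}\cap(\cap_{t>i}^T\mathcal{E}_t)\big)$, and for each $i$ to split $\mathcal{E}_i^{\mathrm{c}}$ according to which defining condition of $\mathcal{E}_i$ fails: $\mathcal{E}_i^{\mathrm{c}}\subseteq\{Y_i\notin\mathcal{T}_i\}\cup\{S_{i-1}(Y_T)>1\}$. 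This breaks $I_2$ into a ``covering-failure'' part $\sum_i\mathbb{P}(Y_i\notin\mathcal{T}_i,\,Y_T\in\cap_{t>i}^T\mathcal{E}_t)$ and an ``overshoot'' part $\sum_i\mathbb{P}(S_{i-1}(Y_T)>1,\,Y_T\in\cap_{t>i}^T\mathcal{E}_t)$, treated separately, with the boundary index $i=T$ (where $\cap_{t>T}^T\mathcal{E}_t=\mathbb{R}^d$) handled as a special case.

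For the covering-failure part, the point is that the event $\{Y_T\in\cap_{t>i}^T\mathcal{E}_t\}$, though not $\sigma(Y_i)$-measurable, is \emph{contained} in the $\sigma(Y_i)$-event $\{p_{Y_i}(Y_i)\le e^2 p_{X_i}(Y_i)\}$ by the density-ratio bound \eqref{eq:l2} (valid for $i\le T-1$); changing measure from $p_{Y_i}$ to $p_{X_i}$ on that event gives $\mathbb{P}(Y_i\notin\mathcal{T}_i,\,Y_T\in\cap_{t>i}^T\mathcal{E}_t)\le e^2\,\mathbb{P}(X_i\notin\mathcal{T}_i)$. For $i=T$ the covering failure is exactly $\mathbb{P}(Y_T\notin\mathcal{T}_T)$, while the remaining piece $\{Y_T\in\mathcal{T}_T\}\cap\{\xi_T(Y_T)>1\}$ of $\mathcal{E}_T^{\mathrm{c}}$ is pushed into the overshoot analysis. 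Summing yields $e^2\sum_{i=2}^{T-1}\mathbb{P}(X_i\notin\mathcal{T}_i)+\mathbb{P}(Y_T\notin\mathcal{T}_T)$.

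For the overshoot part, the sets $\{S_{i-1}(Y_T)>1\}\cap(\cap_{t>i}^T\mathcal{E}_t)$ for $2\le i\le T-1$, together with $\{Y_T\in\mathcal{T}_T\}\cap\{\xi_T(Y_T)>1\}$ for $i=T$, are pairwise disjoint, since each sits inside the corresponding disjoint set $\mathcal{E}_i^{\mathrm{c}}\cap(\cap_{t>i}^T\mathcal{E}_t)$; hence their total probability equals $\mathbb{P}(\mathcal{U})$ for their union $\mathcal{U}$. The crucial observation is that on $\mathcal{U}$ one always has $\sum_{j=2}^{T-1}\xi_j(Y_j)\mathds{1}\{Y_T\in\cap_{s>j}^T\mathcal{E}_s\}+\xi_T(Y_T)\mathds{1}\{Y_T\in\mathcal{T}_T\}>1$: for the index $i$ witnessing membership in $\mathcal{U}$, the nestedness $\cap_{t>i}^T\mathcal{E}_t\subseteq\cap_{s>j}^T\mathcal{E}_s$ (valid for $j\ge i$) makes every indicator with $i\le j\le T-1$ equal $1$, and the inclusion $\cap_{t>i}^T\mathcal{E}_t\subseteq\mathcal{E}_T$ forces $Y_T\in\mathcal{T}_T$, so the displayed sum is at least $S_{i-1}(Y_T)>1$ (the inequality being trivial on the $i=T$ leftover set, since $\xi_j\ge0$). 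Markov's inequality then bounds $\mathbb{P}(\mathcal{U})$ by $\sum_{j=2}^{T-1}\mathbb{E}[\xi_j(Y_j)\mathds{1}\{Y_T\in\cap_{s>j}^T\mathcal{E}_s\}]+\mathbb{E}[\xi_T(Y_T)\mathds{1}\{Y_T\in\mathcal{T}_T\}]$; using $\xi_j\ge0$ and the same change-of-measure trick as above, each first-sum term is at most $e^2\mathbb{E}[\xi_j(X_j)]$, and the last term is at most $e\,\mathbb{E}[\xi_T(X_T)]\le e^2\mathbb{E}[\xi_T(X_T)]$ by Lemma~\ref{lem-T} (which supplies $p_{Y_T}\le e\,p_{X_T}$ on $\mathcal{T}_T$). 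Adding the covering-failure and overshoot bounds reproduces the claimed inequality.

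The main obstacle is the overshoot part: applying Markov to each event $\{S_{i-1}(Y_T)>1\}$ separately would reintroduce $\sum_i S_{i-1}(Y_T)=\sum_i\sum_{j\ge i}\xi_j(Y_j)$, in which each $\xi_j$ is counted of order $T$ times, destroying the target rate. The fix — the technical heart of the argument — is to exploit the disjointness of the sets $\mathcal{E}_i^{\mathrm{c}}\cap(\cap_{t>i}^T\mathcal{E}_t)$ to replace the union of overshoot events by a single Markov inequality applied to the ``load'' variable $\sum_j\xi_j(Y_j)\mathds{1}\{Y_T\in\cap_{s>j}^T\mathcal{E}_s\}$, so that each $\xi_j$ is counted once; this relies on the monotone nestedness of the sets $\cap_{t>i}^T\mathcal{E}_t$ in $i$. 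A secondary nuisance is the boundary index $t=T$, where \eqref{eq:l2} does not apply and one must fall back on Lemma~\ref{lem-T} restricted to $\mathcal{T}_T$; carefully accounting for the leftover set $\{Y_T\in\mathcal{T}_T\}\cap\{\xi_T(Y_T)>1\}$ of $\mathcal{E}_T^{\mathrm{c}}$ is exactly what keeps the coefficient of $\mathbb{P}(Y_T\notin\mathcal{T}_T)$ equal to $1$.
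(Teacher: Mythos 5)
Your proposal is correct and follows essentially the same route as the paper: the same split of $\mathcal{E}_i^{\mathrm{c}}\cap(\cap_{t>i}^T\mathcal{E}_t)$ into a covering-failure piece and an overshoot piece, the same change of measure via \eqref{eq:l2} and Lemma~\ref{lem-T}, and the same use of disjointness to count each $\xi_j$ only once. Your ``load variable plus Markov'' packaging of the overshoot part is just a rephrasing of the paper's step of bounding $\mathds{1}\{Y_T\in J_{1,i}\}$ by $S_{i-1}(Y_T)\mathds{1}\{Y_T\in J_{1,i}\}$ and then swapping the order of summation.
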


\begin{proof} See Appendix~\ref{PF=0000202}.\end{proof}

Finally, putting \eqref{eq:TV}, \eqref{eq:11}, and Lemma~\ref{lem:i2}
together, we conclude that
\begin{align*}
\mathsf{TV}\big(p_{X_{1}},p_{Y_{1}}\big) & \le2\cdot\mathsf{TV}(p_{X_{T}},p_{Y_{T}})+\left(e+e^{2}\right)\sum_{t=2}^{T}\mathbb{E}\left[\xi_{t}(X_{t})\right]+e^{2}\sum_{t=2}^{T-1}\mathbb{P}\left(X_{t}\notin\mathcal{T}_{t}\right)+\mathcal{\mathbb{P}}\left(Y_{T}\notin\mathcal{T}_{T}\right)\\
 & \overset{(\text{a})}{\le}3T^{-99}+11\sum_{t=2}^{T}\mathbb{E}\left[\xi_{t}(X_{t})\right]+8T\exp\Big(-\frac{C_{1}}{4}k\log T\Big)\\
 & \overset{(\text{b})}{\le}12C_{5}\Big(\frac{k\log^{3}T}{T}+c_{1}\big(\varepsilon_{\mathsf{score}}+\varepsilon_{\mathsf{Jacobi}}\big)\log T+\frac{C_{6}k\log^{2}T}{T}\Big)\\
 & \le c\Big(\frac{k\log^{3}T}{T}+\big(\varepsilon_{\mathsf{score}}+\varepsilon_{\mathsf{Jacobi}}\big)\log T\Big)
\end{align*}
for some constant $c\gg C_{5}\big(c_{1}+C_{6}\big).$ Here relation
(a) makes use of $\mathsf{\mathsf{TV}}\left(p_{X_{T}},p_{Y_{T}}\right)\leq T^{-99}$
(see Lemma~\ref{lemma:T3}) and Lemma~\ref{lem:0}; while step (b)
follows from the definition of $\xi_{t}(X_{t})$ (see \eqref{eq:S_t}), Lemma~\ref{lemma:T4} and the Cauchy-Schwarz inequality, provided that $C_{1}$ and $T$ are sufficiently
large.

\section{Discussion}

In this paper, we establish new theoretical guarantees for the probability
flow ODE sampler in the context of diffusion-based generative models.
We show that, with carefully chosen coefficients, the probability
flow ODE sampler achieves an adaptive convergence rate of $O(k/T)$,
where $k$ is the intrinsic dimension of the target distribution.
This result provides evidence that the probability flow ODE sampler
can automatically adapt to unknown low-dimensional structures in the
data distribution, leading to improved sampling efficiency. From a
technical perspective, we also develop a more concise analysis framework
compared to prior work (e.g., \citet{li2024sharp}).

Moving forward, there are several directions worthy of future investigation.
While our results establish a fast convergence rate for the probability
flow ODE sampler, it remains unclear whether this rate is optimal.
Investigating lower bounds for certain challenging instances would
provide deeper insights into the fundamental limits of SGMs. Additionally,
an open question is whether the coefficient design in (\ref{eq:coefficient-design})
is the unique choice that enables nearly dimension-free convergence
(in terms of total variation distance between the target and generated
distributions). Another promising avenue for future research is extending
our theoretical framework to establish sharp and adaptive convergence
rates for the probability flow ODE sampler under the Wasserstein distance.

\section*{Acknowledgements}

The authors thank Gen Li for helpful discussions.

\appendix

\section{Proof of auxiliary lemmas}

\subsection{Proof of Lemma~\ref{lem:0} \protect\label{PF=0000200}}

The proof of claim \eqref{eq:0-1} can be found in \citep[Lemma 1]{li2024adapting}.
In this section, we provide the proof of claim \eqref{eq:0-2}. We
have
\begin{align*}
\mathcal{\mathbb{P}}\left(Y_{T}\notin\mathcal{T}_{T}\right) & \overset{\text{(i)}}{\le}\mathcal{\mathbb{P}}\left(X_{T}\notin\mathcal{T}_{T}\right)+\left|\mathcal{\mathbb{P}}\left(Y_{T}\notin\mathcal{T}_{T}\right)-\mathcal{\mathbb{P}}\left(Y_{T}\notin\mathcal{T}_{T}\right)\right|\\
 & \overset{\text{(ii)}}{\le}\exp\left(-\frac{C_{1}}{4}k\log T\right)+\sup_{A\in\mathcal{F}}\left|\mathcal{\mathbb{P}}\left(Y_{T}\in A\right)-\mathcal{\mathbb{P}}\left(Y_{T}\in A\right)\right|\\
 & \overset{\text{(iii)}}{\le}\exp\left(-\frac{C_{1}}{4}k\log T\right)+\mathsf{TV}(p_{X_{T}},p_{Y_{T}})\\
 & \overset{\text{(iv)}}{\le}\exp\left(-\frac{C_{1}}{4}k\log T\right)+T^{-99},
\end{align*}
where step (i) makes use of triangle inequality; step (ii) applies
claim \eqref{eq:0-1}; step (iii) follows from the definition of TV
distance (see \ref{eq:TV-defn}); while step (iv) follows from Lemma~\ref{lemma:T3}.

\subsection{Proof of Lemma~\ref{lem-T} \protect\label{PF=000020T}}

Recalling the definition of $\mathcal{T}_{T}$ (see \eqref{eq:=000020typical=000020set}),
assume that $x=\sqrt{\overline{\alpha}_{T}}x_{0}+\sqrt{1-\overline{\alpha}_{T}}\omega\in\mathcal{T}_{T},$
where $\Vert\omega\Vert_{2}\leq2\sqrt{d}+\sqrt{C_{1}k\log T}$. We
have
\begin{align*}
\frac{p_{Y_{T}}(x)}{p_{X_{T}}(x)} & \overset{\text{(i)}}{=}\frac{\left(2\pi\right)^{d/2}\exp\left(-\Vert x\Vert_{2}^{2}/2\right)}{\int p_{X_{0}}(x_{0})\left[2\pi(1-\overline{\alpha}_{T})\right]^{-d/2}\exp\left(-\Vert x-\sqrt{\overline{\alpha}_{T}}x_{0}\Vert_{2}^{2}/\{2(1-\overline{\alpha}_{T})\}\right)\mathrm{d}x_{0}}\\
 & \overset{\text{(ii)}}{\le}\left(1-\overline{\alpha}_{T}\right)^{d/2}\exp\bigg\{\frac{1}{2\left(1-\overline{\alpha}_{T}\right)}\cdot\sup_{\Vert x_{0}\Vert_{2}\le T^{c_{R}}}\left(\Vert x-\sqrt{\overline{\alpha}_{T}}x_{0}\Vert_{2}^{2}-\left(1-\overline{\alpha}_{T}\right)\left\Vert x\right\Vert _{2}^{2}\right)\bigg\}\\
 & \overset{\text{(iii)}}{\le}\exp\left\{ \frac{1}{2\left(1-\overline{\alpha}_{T}\right)}\cdot\left(2T^{-c_{2}}\left(4d+2C_{1}k\log T\right)+4T^{-c_{2}/2+c_{R}}\left(2\sqrt{d}+\sqrt{C_{1}k\log T}\right)\right)\right\} \overset{(\text{iv})}{\le}e.
\end{align*}
Here, step (i) arises from $Y_{T}\sim\mathcal{N}(0,I_{d})$; step
(ii) utilizes Assumption \ref{Assumption=000020bounded}; step (iv)
holds true since $\overline{\alpha}_{T}\le T^{-c_{2}}$ (cf. Lemma
\ref{lem:T1}) as long as $c_{2}\gg2c_{R}+1$; and step (iii) follows
from the calculations below:
\begin{align*}
\sup_{\Vert x_{0}\Vert_{2}\le T^{c_{R}}}\left(\Vert x-\sqrt{\overline{\alpha}_{T}}x_{0}\Vert_{2}^{2}-\left(1-\overline{\alpha}_{T}\right)\left\Vert x\right\Vert _{2}^{2}\right) & =\sup_{\Vert x_{0}\Vert_{2}\le T^{c_{R}}}\left(\overline{\alpha}_{T}\left\Vert x\right\Vert _{2}^{2}+\overline{\alpha}_{T}\left\Vert x_{0}\right\Vert _{2}^{2}-2\sqrt{\overline{\alpha}_{T}}x^{\mathrm{T}}x_{0}\right)\\
 & \le\sup_{\Vert x_{0}\Vert_{2}\le T^{c_{R}}}\bigg(\overline{\alpha}_{T}\left\Vert x\right\Vert _{2}^{2}+\overline{\alpha}_{T}\left\Vert x_{0}\right\Vert _{2}^{2}+2\sqrt{\overline{\alpha}_{T}}\left\Vert x\right\Vert _{2}\left\Vert x_{0}\right\Vert _{2}\bigg)\\
 & \overset{\text{(a)}}{\le}T^{-c_{2}}\left\Vert x\right\Vert _{2}^{2}+T^{-c_{2}+2c_{R}}+2T^{-c_{2}/2+c_{R}}\left\Vert x\right\Vert _{2}\\
 & \overset{\text{(b)}}{\le}2T^{-c_{2}}\left(4d+2C_{1}k\log T\right)+4T^{-c_{2}/2+c_{R}}\left(2\sqrt{d}+\sqrt{C_{1}k\log T}\right),
\end{align*}
where step (a) applies $\overline{\alpha}_{T}\le T^{-c_{2}}$ (cf.
Lemma \ref{lem:T1}); and step (b) comes from 
\[
\Vert x\Vert_{2}\leq\sqrt{\overline{\alpha}_{T}}\Vert x_{0}\Vert_{2}+\sqrt{1-\overline{\alpha}_{T}}\Vert\omega\Vert_{2}\le T^{-c_{2}/2+c_{R}}+2\sqrt{d}+\sqrt{C_{1}k\log T}
\]
provided that $c_{2}\gg2c_{R}+1.$

\subsection{Proof of Lemma~\ref{lem:1} \protect\label{PF=0000201}}

In the following, we will analyze $p_{\sqrt{\alpha_{t}}X_{t-1}}(\phi_{t}(x))/p_{X_{t}}(x)$
and $p_{\phi_{t}(Y_{t})}(\phi_{t}(x))/p_{Y_{t}}(x)$ separately.

First, for $p_{\sqrt{\alpha_{t}}X_{t-1}}(\phi_{t}(x))/p_{X_{t}}(x)$,
we start with the following decomposition:
\begin{align}
\frac{\big\| x+\eta_{t}^{\star}s_{t}(x)-\sqrt{\overline{\alpha}_{t}}x_{0}\big\|_{2}^{2}}{2(\alpha_{t}-\overline{\alpha}_{t})} & =\frac{\big\| x-\sqrt{\overline{\alpha}_{t}}x_{0}\big\|_{2}^{2}}{2(1-\overline{\alpha}_{t})}+\frac{(1-\alpha_{t})\big\| x-\sqrt{\overline{\alpha}_{t}}x_{0}\big\|_{2}^{2}}{2(\alpha_{t}-\overline{\alpha}_{t})(1-\overline{\alpha}_{t})}+\frac{2\eta_{t}^{\star}s_{t}(x)^{\top}(x-\sqrt{\overline{\alpha}_{t}}x_{0})+\|\eta_{t}^{\star}s_{t}(x)\|_{2}^{2}}{2(\alpha_{t}-\overline{\alpha}_{t})}\nonumber \\
 & =\frac{\big\| x-\sqrt{\overline{\alpha}_{t}}x_{0}\big\|_{2}^{2}}{2(1-\overline{\alpha}_{t})}+\frac{(1-\alpha_{t})\overline{\alpha}_{t}\big\| x_{0}-\widehat{x}_{0}\big\|_{2}^{2}}{2(\alpha_{t}-\overline{\alpha}_{t})(1-\overline{\alpha}_{t})}+\zeta_{t}(x,x_{0}),\label{eq:1}
\end{align}
where we define
\[
\widehat{x}_{0}:=\int p_{X_{0}|X_{t}}(x_{0}\mymid x)x_{0}\mathrm{d}x_{0},
\]
and
\begin{align*}
\zeta_{t}(x,x_{0}) & :=-\frac{\eta_{t}^{\star}\sqrt{\overline{\alpha}_{t}}\big(x_{0}-\widehat{x}_{0}\big)^{\top}(x-\sqrt{\overline{\alpha}_{t}}\widehat{x}_{0})}{\sqrt{(\alpha_{t}-\overline{\alpha}_{t})(1-\overline{\alpha}_{t})^{3}}}+\frac{2\eta_{t}^{\star}\big[s_{t}(x)-s_{t}^{\star}(x)\big]^{\top}(x-\sqrt{\overline{\alpha}_{t}}\widehat{x}_{0})}{2\sqrt{(\alpha_{t}-\overline{\alpha}_{t})(1-\overline{\alpha}_{t})}}\\
 & +\frac{(\eta_{t}^{\star})^{2}\big\| s_{t}(x)-s_{t}^{\star}(x)\big\|_{2}^{2}}{2(\alpha_{t}-\overline{\alpha}_{t})}-\left(\frac{1-\overline{\alpha}_{t}}{\alpha_{t}-\overline{\alpha}_{t}}-\sqrt{\frac{1-\overline{\alpha}_{t}}{\alpha_{t}-\overline{\alpha}_{t}}}\right)\big(s_{t}(x)-s_{t}^{\star}(x)\big)^{\top}\sqrt{\overline{\alpha}_{t}}\big(x_{0}-\widehat{x}_{0}\big).
\end{align*}
Recall from (\ref{eq:forward-formula}) that $X_{t}\mymid X_{0}=x_{0}\sim\mathcal{N}(\sqrt{\overline{\alpha}_{t}}x_{0},(1-\overline{\alpha}_{t})I_{d})$.
We can write the conditional distribution of $X_{0}$ given $X_{t}=x$
as
\begin{equation}
p_{X_{0}|X_{t}}(x_{0}\mymid x)=\frac{p_{X_{0}}(x_{0})}{p_{X_{t}}(x)}p_{X_{t}|X_{0}}(x\mymid x_{0})=\frac{p_{X_{0}}(x_{0})}{p_{X_{t}}(x)}\left[2\pi(1-\overline{\alpha}_{t})\right]^{-d/2}\exp\left(-\frac{\Vert x-\sqrt{\overline{\alpha}_{t}}x_{0}\Vert_{2}^{2}}{2(1-\overline{\alpha}_{t})}\right).\label{eq:2}
\end{equation}
Then, using the update rule \eqref{eq:updated=000020rule}, we have
\begin{align}
 & \frac{p_{\sqrt{\alpha_{t}}X_{t-1}}(\phi_{t}(x)\big)}{p_{X_{t}}(x)}=\frac{1}{p_{X_{t}}(x)}\int_{x_{0}}p_{x_{0}}(x_{0})p_{\sqrt{\alpha_{t}}X_{t-1}|X_{0}}(\phi_{t}(x))\mathrm{d}x_{0}\nonumber\\
 & \quad\overset{\text{(i)}}{=}\frac{1}{p_{X_{t}}(x)}\int_{x_{0}}p_{x_{0}}(x_{0})[2\pi(\alpha_{t}-\overline{\alpha}_{t})]^{-d/2}\exp\left(-\frac{\Vert\phi_{t}(x)-\sqrt{\overline{\alpha}_{t}}x_{0}\Vert_{2}^{2}}{2(\alpha_{t}-\overline{\alpha}_{t})}\right)\mathrm{d}x_{0}\nonumber \\
 & \quad\overset{\text{(ii)}}{=}\left(\frac{1-\overline{\alpha}_{t}}{\alpha_{t}-\overline{\alpha}_{t}}\right)^{d/2}\int_{x_{0}}p_{X_{0}|X_{t}}(x_{0}\mymid x)\exp\left(-\frac{\Vert\phi_{t}(x)-\sqrt{\overline{\alpha}_{t}}x_{0}\Vert_{2}^{2}}{2(\alpha_{t}-\overline{\alpha}_{t})}+\frac{\Vert x-\sqrt{\overline{\alpha}_{t}}x_{0}\Vert_{2}^{2}}{2(1-\overline{\alpha}_{t})}\right)\mathrm{d}x_{0}\nonumber \\
 & \quad\overset{\text{(iii)}}{=}\left(\frac{1-\overline{\alpha}_{t}}{\alpha_{t}-\overline{\alpha}_{t}}\right)^{d/2}\int_{x_{0}}p_{X_{0}|X_{t}}(x_{0}\mymid x)\exp\left(-\frac{(1-\alpha_{t})\overline{\alpha}_{t}\big\| x_{0}-\widehat{x}_{0}\big\|_{2}^{2}}{2(\alpha_{t}-\overline{\alpha}_{t})(1-\overline{\alpha}_{t})}-\zeta_{t}(x,x_{0})\right)\mathrm{d}x_{0}\nonumber \\
 & \quad\overset{\text{(iv)}}{\ge}\left(\frac{1-\overline{\alpha}_{t}}{\alpha_{t}-\overline{\alpha}_{t}}\right)^{d/2}\exp\left(-\frac{(1-\alpha_{t})\overline{\alpha}_{t}\int p_{X_{0}|X_{t}}(x_{0}\mymid x)\big\| x_{0}-\widehat{x}_{0}\big\|_{2}^{2}\mathrm{d}x_{0}}{2(\alpha_{t}-\overline{\alpha}_{t})(1-\overline{\alpha}_{t})}-\int p_{X_{0}|X_{t}}(x_{0}\mymid x)\zeta_{t}(x,x_{0})\mathrm{d}x_{0}\right).\label{eq:4.0}
\end{align}
Here step (i) utilizes $\sqrt{\alpha_{t}}X_{t-1}\mymid X_{0}=x_{0}\sim\mathcal{N}(\sqrt{\overline{\alpha}_{t}}x_{0},(\alpha_{t}-\overline{\alpha}_{t})I_{d})$;
step (ii) makes use of \eqref{eq:2}; step (iii) follows from \eqref{eq:1};
while step (iv) applies Jensen's inequality. Before bounding the term
$\int p_{X_{0}|X_{t}}(x_{0}\mymid x)\zeta_{t}(x,x_{0})\mathrm{d}x_{0}$,
we observe that for $2\le t\le T,$ our choice of coefficient (\ref{eq:coefficient-design})
satisfies
\[
\eta_{t}^{\star}=1-\overline{\alpha}_{t}-\sqrt{(1-\overline{\alpha}_{t})(\alpha_{t}-\overline{\alpha}_{t})}=(1-\overline{\alpha}_{t})\left(1-\sqrt{1-\frac{1-\alpha_{t}}{1-\overline{\alpha}_{t}}}\right).
\]
Using the property of $\alpha_{t}$ (see Lemma \ref{lem:T1}) and
the inequality $1+x/2\ge\sqrt{1+x}\ge1+x/2-x^{2}$ for $x\ge-1$,
we derive the following relationship:
\begin{equation}
0\le\eta_{t}^{\star}-\frac{1-\alpha_{t}}{2}\le(1-\alpha_{t})\cdot\frac{1-\alpha_{t}}{1-\overline{\alpha}_{t}}\le\frac{1-\alpha_{t}}{2}\cdot\frac{16c_{1}\log T}{T}.\label{eq:0}
\end{equation}
With this result, we can proceed our analysis as follows:
\begin{align}
 & \bigg|\int p_{X_{0}|X_{t}}(x_{0}\mymid x)\zeta_{t}(x,x_{0})\mathrm{d}x_{0}\bigg|\overset{\text{(a)}}{\le}\frac{2\eta_{t}^{\star}\left|\big[s_{t}(x)-s_{t}^{\star}(x)\big]^{\top}(x-\sqrt{\overline{\alpha}_{t}}\widehat{x}_{0})\right|}{2\sqrt{(\alpha_{t}-\overline{\alpha}_{t})(1-\overline{\alpha}_{t})}}+\frac{(\eta_{t}^{\star})^{2}\big\| s_{t}(x)-s_{t}^{\star}(x)\big\|_{2}^{2}}{2(\alpha_{t}-\overline{\alpha}_{t})}\nonumber \\
 & \qquad\le\eta_{t}^{\star}\sqrt{\frac{1-\overline{\alpha}_{t}}{\alpha_{t}-\overline{\alpha}_{t}}}\varepsilon_{\mathsf{score},t}(x)+\frac{(\eta_{t}^{\star})^{2}\varepsilon_{\mathsf{score},t}(x)^{2}}{2(\alpha_{t}-\overline{\alpha}_{t})}\nonumber \\
 & \qquad\overset{\text{(b)}}{\le}\frac{1-\alpha_{t}}{2}\left(1+\frac{16c_{1}\log T}{T}\right)\sqrt{\frac{1-\overline{\alpha}_{t}}{\alpha_{t}-\overline{\alpha}_{t}}}\varepsilon_{\mathsf{score},t}(x)+\bigg(\frac{1-\alpha_{t}}{2}\bigg)^{2}\left(1+\frac{16c_{1}\log T}{T}\right)^{2}\frac{\varepsilon_{\mathsf{score},t}(x)^{2}}{2(\alpha_{t}-\overline{\alpha}_{t})}\nonumber \\
 & \qquad\overset{\text{(c)}}{\le}(1-\alpha_{t})\varepsilon_{\mathsf{score},t}(x)+(1-\alpha_{t})\varepsilon_{\mathsf{score},t}(x)^{2}\frac{1-\alpha_{t}}{1-\overline{\alpha}_{t}}\overset{\text{(d)}}{\le}5(1-\alpha_{t})\varepsilon_{\mathsf{score},t}(x).\label{eq:4}
\end{align}
Here, step (a) holds since $\int_{x_{0}}p_{X_{0}|X_{t}}(x_{0}\mymid x)(x_{0}-\widehat{x}_{0})\mathrm{d}x_{0}=0$;
step (b) follows from \eqref{eq:0}; while step (c) and (d) follows
from Lemma~\ref{lem:T1}, provided that $T$ is sufficient large
and $\frac{1-\alpha_{t}}{1-\overline{\alpha}_{t}}\varepsilon_{\mathsf{score},t}(x)\le1$.
Combining \eqref{eq:4} and \eqref{eq:4.0} yields
\begin{equation}
\frac{p_{\sqrt{\alpha_{t}}X_{t-1}}(\phi_{t}(x)\big)}{p_{X_{t}}(x)}\ge\left(\frac{1-\overline{\alpha}_{t}}{\alpha_{t}-\overline{\alpha}_{t}}\right)^{d/2}\exp\bigg(-\frac{(1-\alpha_{t})\overline{\alpha}_{t}\int p_{X_{0}|X_{t}}(x_{0}\mymid x)\big\| x_{0}-\widehat{x}_{0}\big\|_{2}^{2}\mathrm{d}x_{0}}{2(\alpha_{t}-\overline{\alpha}_{t})(1-\overline{\alpha}_{t})}-5(1-\alpha_{t})\varepsilon_{\mathsf{score},t}(x)\bigg)\label{eq:6}
\end{equation}

Next, for $p_{\phi_{t}(Y_{t})}(\phi_{t}(x))/p_{Y_{t}}(x),$ we have
the following transformation: 
\begin{align}
 & \frac{p_{\phi_{t}(Y_{t})}(\phi_{t}(x))}{p_{Y_{t}}(x)}=\left|\det\left(\frac{\partial\phi_{t}(x)}{\partial x}\right)^{-1}\right|=\left|\det\left(\frac{\partial\phi_{t}^{\star}(x)}{\partial x}+\bigg(\frac{\partial\phi_{t}(x)}{\partial x}-\frac{\partial\phi_{t}^{\star}(x)}{\partial x}\bigg)\right)^{-1}\right|\nonumber \\
 & \qquad=\left|\mathsf{\det}\bigg(\sqrt{\frac{\alpha_{t}-\overline{\alpha}_{t}}{1-\overline{\alpha}_{t}}}I+\frac{\partial\phi_{t}^{\star}(x)}{\partial x}-\sqrt{\frac{\alpha_{t}-\overline{\alpha}_{t}}{1-\overline{\alpha}_{t}}}I+\eta_{t}^{\star}\Big[J_{s_{t}}(x)-J_{s_{t}^{\star}}(x)\Big]\bigg)^{-1}\right|\nonumber \\
 & \qquad=\left(\frac{\alpha_{t}-\overline{\alpha}_{t}}{1-\overline{\alpha}_{t}}\right)^{-d/2}\left|\det\bigg(I+\bigg[\sqrt{\frac{1-\overline{\alpha}_{t}}{\alpha_{t}-\overline{\alpha}_{t}}}\frac{\partial\phi_{t}^{\star}(x)}{\partial x}-I\bigg]+\sqrt{\frac{1-\overline{\alpha}_{t}}{\alpha_{t}-\overline{\alpha}_{t}}}\eta_{t}^{\star}\Big[J_{s_{t}}(x)-J_{s_{t}^{\star}}(x)\Big]\bigg)^{-1}\right|,\label{eq:7}
\end{align}
where $\frac{\partial\phi_{t}^{\star}(x)}{\partial x}$ and $\frac{\partial\phi_{t}(x)}{\partial x}$
are the Jacobian matrices of $\phi_{t}^{\star}(x)$ and $\phi_{t}(x)$.
Using the expression of $\phi_{t}^{\star}$ (see \eqref{eq:phi}),
we can show that
\begin{align}
\sqrt{\frac{1-\overline{\alpha}_{t}}{\alpha_{t}-\overline{\alpha}_{t}}}\frac{\partial\phi_{t}^{\star}(x)}{\partial x}-I & =\bigg(\sqrt{\frac{1-\overline{\alpha}_{t}}{\alpha_{t}-\overline{\alpha}_{t}}}-1\bigg)\frac{\overline{\alpha}_{t}}{1-\overline{\alpha}_{t}}\bigg\{\int_{x_{0}}p_{X_{0}|X_{t}}(x_{0}\mymid x)x_{0}x_{0}^{\top}\mathrm{d}x_{0}\nonumber \\
 & \qquad\qquad-\Big(\int_{x_{0}}p_{X_{0}|X_{t}}(x_{0}\mymid x)x_{0}\mathrm{d}x_{0}\Big)\Big(\int_{x_{0}}p_{X_{0}|X_{t}}(x_{0}\mymid x)x_{0}\mathrm{d}x_{0}\Big)^{\top}\bigg\},\label{eq:8}
\end{align}
which demonstrates that
\begin{align}
 & \mathsf{Tr}\left(\sqrt{\frac{1-\overline{\alpha}_{t}}{\alpha_{t}-\overline{\alpha}_{t}}}\frac{\partial\phi_{t}^{\star}(x)}{\partial x}-I\right)=\bigg(\sqrt{\frac{1-\overline{\alpha}_{t}}{\alpha_{t}-\overline{\alpha}_{t}}}-1\bigg)\frac{\overline{\alpha}_{t}}{1-\overline{\alpha}_{t}}\bigg\{\int_{x_{0}}p_{X_{0}|X_{t}}(x_{0}\mymid x)\Vert x_{0}-\widehat{x}_{0}\Vert_{2}^{2}\mathrm{d}x_{0}\bigg\}.\label{eq:7.2}
\end{align}
Moreover, using \eqref{eq:0} and Lemma \ref{lem:T1}, one can bound
the term $\sqrt{\frac{1-\overline{\alpha}_{t}}{\alpha_{t}-\overline{\alpha}_{t}}}\eta_{t}^{\star}\Big[J_{s_{t}}(x)-J_{s_{t}^{\star}}(x)\Big]$
as follows:
\begin{align}
\left|\mathsf{Tr}\bigg(\sqrt{\frac{1-\overline{\alpha}_{t}}{\alpha_{t}-\overline{\alpha}_{t}}}\eta_{t}^{\star}\Big[J_{s_{t}}(x)-J_{s_{t}^{\star}}(x)\Big]\bigg)\right| & \le(1-\alpha_{t})\big|\mathsf{Tr}\big[J_{s_{t}}(x)-J_{s_{t}^{\star}}(x)\big]\big|\le(1-\alpha_{t})\varepsilon_{\mathsf{Jacobi},t}(x),\label{eq:7.3}\\
\left\Vert \sqrt{\frac{1-\overline{\alpha}_{t}}{\alpha_{t}-\overline{\alpha}_{t}}}\eta_{t}^{\star}\Big[J_{s_{t}}(x)-J_{s_{t}^{\star}}(x)\Big]\right\Vert _{\mathrm{F}}^{2} & \le(1-\alpha_{t})^{2}\bigg\Vert J_{s_{t}}(x)-J_{s_{t}^{\star}}(x)\bigg\Vert_{\mathrm{F}}^{2}\le(1-\alpha_{t})^{2}\varepsilon_{\mathsf{Jacobi},t}(x)^{2}\label{eq:7.4}
\end{align}
as long as $T\gg\log T.$ To proceed further, we use Lemma~\ref{lemma:logdet}
to achieve the following result: for any matrices $A,B\in\mathbb{R}^{d\times d}$
such that $\Vert A\Vert\le1/8$ and $\Vert B\Vert\le1/8,$ we have
\begin{align}
\log\left|\det(I+A+B)\right| & \overset{\text{(a)}}{\ge}\mathsf{Tr}(A)+\mathsf{Tr}(B)-2\left\Vert A+B\right\Vert _{\mathrm{F}}^{2}\nonumber \\
 & \overset{\text{(b)}}{\ge}\mathsf{Tr}(A)+\mathsf{Tr}(B)-4\left\Vert A\right\Vert _{\mathrm{F}}^{2}-4\left\Vert B\right\Vert _{\mathrm{\mathrm{F}}}^{2},\label{eq:7.0}
\end{align}
Here step (a) utilizes Lemma~\ref{lemma:logdet} and step (b) follows
from the Cauchy-Schwarz inequality. Combining the above results, we
deduce that
\begin{align}
 & \quad\ \log\frac{p_{\phi_{t}(Y_{t})}(\phi_{t}(x))}{p_{Y_{t}}(x)}\nonumber \\
 & \overset{\text{(i)}}{=}-\frac{d}{2}\log\left(\frac{\alpha_{t}-\overline{\alpha}_{t}}{1-\overline{\alpha}_{t}}\right)-\log\left|\mathsf{det}\bigg(I+\bigg[\sqrt{\frac{1-\overline{\alpha}_{t}}{\alpha_{t}-\overline{\alpha}_{t}}}\frac{\partial\phi_{t}^{\star}(x)}{\partial x}-I\bigg]+\sqrt{\frac{1-\overline{\alpha}_{t}}{\alpha_{t}-\overline{\alpha}_{t}}}\eta_{t}^{\star}\Big[J_{s_{t}}(x)-J_{s_{t}^{\star}}(x)\Big]\bigg)\right|\nonumber \\
 & \overset{\text{(ii)}}{\le}-\frac{d}{2}\log\left(\frac{\alpha_{t}-\overline{\alpha}_{t}}{1-\overline{\alpha}_{t}}\right)-\mathsf{Tr}\left(\sqrt{\frac{1-\overline{\alpha}_{t}}{\alpha_{t}-\overline{\alpha}_{t}}}\frac{\partial\phi_{t}^{\star}(x)}{\partial x}-I\right)-\mathsf{Tr}\left(\sqrt{\frac{1-\overline{\alpha}_{t}}{\alpha_{t}-\overline{\alpha}_{t}}}\eta_{t}^{\star}\Big[J_{s_{t}}(x)-J_{s_{t}^{\star}}(x)\Big]\right)\nonumber \\
 & \qquad\qquad+4\left\Vert \sqrt{\frac{1-\overline{\alpha}_{t}}{\alpha_{t}-\overline{\alpha}_{t}}}\frac{\partial\phi_{t}^{\star}(x)}{\partial x}-I\right\Vert _{\mathrm{F}}^{2}+4\left\Vert \sqrt{\frac{1-\overline{\alpha}_{t}}{\alpha_{t}-\overline{\alpha}_{t}}}\eta_{t}^{\star}\Big[J_{s_{t}}(x)-J_{s_{t}^{\star}}(x)\Big]\right\Vert _{\mathrm{F}}^{2}\nonumber \\
 & \overset{\text{(iii)}}{\le}-\frac{d}{2}\log\left(\frac{\alpha_{t}-\overline{\alpha}_{t}}{1-\overline{\alpha}_{t}}\right)-\mathsf{Tr}\bigg(\sqrt{\frac{1-\overline{\alpha}_{t}}{\alpha_{t}-\overline{\alpha}_{t}}}\frac{\partial\phi_{t}^{\star}(x)}{\partial x}-I\bigg)+4\left\Vert \sqrt{\frac{1-\overline{\alpha}_{t}}{\alpha_{t}-\overline{\alpha}_{t}}}\frac{\partial\phi_{t}^{\star}(x)}{\partial x}-I\right\Vert _{\mathrm{F}}^{2}\nonumber \\
 & \qquad\qquad+(1-\alpha_{t})\varepsilon_{\mathsf{Jacobi},t}(x)+4(1-\alpha_{t})^{2}\varepsilon_{\mathsf{Jacobi},t}(x)^{2}\nonumber \\
 & \overset{(\text{iv})}{\le}-\frac{d}{2}\log\left(\frac{\alpha_{t}-\overline{\alpha}_{t}}{1-\overline{\alpha}_{t}}\right)-\left(\sqrt{\frac{1-\overline{\alpha}_{t}}{\alpha_{t}-\overline{\alpha}_{t}}}-1\right)\frac{\overline{\alpha}_{t}}{1-\overline{\alpha}_{t}}\left\{ \int_{x_{0}}p_{X_{0}|X_{t}}(x_{0}\mymid x)\Vert x_{0}-\widehat{x}_{0}\Vert_{2}^{2}\mathrm{d}x_{0}\right\} \nonumber \\
 & \qquad\qquad+4\left\Vert \sqrt{\frac{1-\overline{\alpha}_{t}}{\alpha_{t}-\overline{\alpha}_{t}}}\frac{\partial\phi_{t}^{\star}(x)}{\partial x}-I\right\Vert _{\mathrm{F}}^{2}+5(1-\alpha_{t})\varepsilon_{\mathsf{Jacobi},t}(x),\label{eq:13}
\end{align}
where step (i) follows from \eqref{eq:7}; step (ii) follows from
\eqref{eq:7.0}; step (iii) follows from \eqref{eq:7.3} and \eqref{eq:7.4};
and step (iv) holds given that $(1-\alpha_{t})\varepsilon_{\mathsf{Jacobi},t}(x)\le1.$

Now, combining the relations \eqref{eq:6} and \eqref{eq:13} yields
\begin{align}
 & \frac{p_{\sqrt{\alpha_{t}}X_{t-1}}(\phi_{t}(x))}{p_{X_{t}}(x)}/\frac{p_{\phi_{t}(Y_{t})}(\phi_{t}(x))}{p_{Y_{t}}(x)}\nonumber \\
 & \qquad\ge\exp\Bigg\{\underbrace{\left(\sqrt{\frac{1-\overline{\alpha}_{t}}{\alpha_{t}-\overline{\alpha}_{t}}}-1-\frac{1-\alpha_{t}}{2(\alpha_{t}-\overline{\alpha}_{t})}\right)\frac{\overline{\alpha}_{t}}{1-\overline{\alpha}_{t}}\left(\int p_{X_{0}|X_{t}}(x_{0}\mymid x)\Vert x_{0}-\widehat{x}_{0}\Vert_{2}^{2}\mathrm{d}x_{0}\right)}_{:=\Delta}\nonumber \\
 & \qquad\qquad\qquad-5(1-\alpha_{t})\left[\varepsilon_{\mathsf{score},t}(x)+\varepsilon_{\mathsf{Jacobi},t}(x)\right]-4\left\Vert \sqrt{\frac{1-\overline{\alpha}_{t}}{\alpha_{t}-\overline{\alpha}_{t}}}\frac{\partial\phi_{t}^{\star}(x)}{\partial x}-I\right\Vert _{\mathrm{F}}^{2}\Bigg\}.\label{eq:14}
\end{align}
The remaining task is to bound $\Delta.$ Using Lemma \ref{lem:T1}
and the inequality $1+x/2\ge\sqrt{1+x}\ge1+x/2-x^{2}$ for $x\ge-1$
again, we obtain that
\begin{align}
0 & \ge\sqrt{\frac{1-\overline{\alpha}_{t}}{\alpha_{t}-\overline{\alpha}_{t}}}-1-\frac{1-\alpha_{t}}{2(\alpha_{t}-\overline{\alpha}_{t})}=\sqrt{1+\frac{1-\alpha_{t}}{\alpha_{t}-\overline{\alpha}_{t}}}-1-\frac{1-\alpha_{t}}{2(\alpha_{t}-\overline{\alpha}_{t})}\nonumber \\
 & \ge-\bigg(\frac{1-\alpha_{t}}{\alpha-\overline{\alpha}_{t}}\bigg)^{2}\ge-\frac{64c_{1}^{2}\log^{2}T}{T^{2}}.\label{eq:14.0}
\end{align}
In order to control the term
\[
\frac{\overline{\alpha}_{t}}{1-\overline{\alpha}_{t}}\left(\int p_{X_{0}|X_{t}}(x_{0}\mymid x)\Vert x_{0}-\widehat{x}_{0}\Vert_{2}^{2}\mathrm{d}x_{0}\right),
\]
we adopt some notation from \citet{li2024adapting}. The definition
of $\mathcal{T}_{t}$ (see \eqref{eq:=000020typical=000020set}) implies
that for any $x_{t}\in\mathcal{T}_{t}$, there exists an index $i(x_{t})\in\mathcal{I}$ such that we can find two points $x_{0}(x_{t})\in\mathcal{B}_{i(x_{t})}$ and $\omega\in\mathcal{G}$ satisfying
\[
x_{t}=\sqrt{\overline{\alpha}_{t}}x_{0}(x_{t})+\sqrt{1-\overline{\alpha}_{t}}\omega.
\]
For any constant $r>0$, we define 
\[
\mathcal{I}\left(x_{t};r\right)\coloneqq\left\{ 1\leq i\leq N_{\varepsilon}:\overline{\alpha}_{t}\Vert x_{i}^{\star}-x_{i(x_{t})}^{\star}\Vert_{2}^{2}\leq r\cdot k(1-\overline{\alpha}_{t})\log T\right\} .
\]
Then we fix some large enough constant $C_{3}>0$ and define two disjoint
sets as follows:
\begin{align*}
\mathcal{X}_{t}\left(x_{t}\right) & \coloneqq\bigcup_{i\in\mathcal{I}(x_{t};C_{3})}\mathcal{B}_{i}\qquad\text{and}\qquad\mathcal{Y}_{t}\left(x_{t}\right)\coloneqq\bigcup_{i\notin\mathcal{I}(x_{t};C_{3})}\mathcal{B}_{i}.
\end{align*}
Using the above notation, the following results hold as long as $x_{t}\in\mathcal{T}_{t}$
(see \citep[(A.32) and (A.15)]{li2024adapting}). Suppose that $x_{0}\in\mathcal{B}_{i}$
for some $1\leq i\leq N_{\varepsilon}$, then
\[
\Vert x_{0}-\widehat{x}_{0}\Vert_{2}\le\Vert x_{i(x_{i})}^{\star}-x_{i}^{\star}\Vert_{2}+4\sqrt{\frac{C_{3}k(1-\overline{\alpha}_{t})\log T}{\overline{\alpha}_{t}}}.
\]
Furthermore, if $C_{3}\gg C_{1}$, then for any $\mathcal{B}_{i}\subseteq\mathcal{Y}_{t}(x_{t})$,
\[
\mathcal{\mathbb{P}}(X_{0}\in\mathcal{B}_{i}\mymid X_{t}=x_{t})\le\exp\bigg(-\frac{\overline{\alpha}_{t}}{16(1-\overline{\alpha_{t}})}\Vert x_{i(x_{i})}^{\star}-x_{i}^{\star}\Vert_{2}^{2}\bigg)\mathbb{P}(X_{0}\in\mathcal{B}_{i}).
\]
With these two results in place, we have
\begin{align*}
\int_{\mathcal{X}_{t}(x_{t})}p_{X_{0}|X_{t}}(x_{0}\mymid x)\Vert x_{0}-\widehat{x}_{0}\Vert_{2}^{2}\mathrm{d}x_{0} & \le\sup_{x_{0}\in\mathcal{X}_{t}(x_{t})}\left(\Vert x_{i(x_{i})}^{\star}-x_{i}^{\star}\Vert_{2}+4\sqrt{\frac{C_{3}k(1-\overline{\alpha}_{t})\log T}{\overline{\alpha}_{t}}}\right)^{2}\\
 & \le25\frac{C_{3}k(1-\overline{\alpha}_{t})\log T}{\overline{\alpha}_{t}},
\end{align*}
and
\begin{align*}
 & \int_{\mathcal{Y}_{t}(x_{t})}p_{X_{0}|X_{t}}(x_{0}\mymid x)\Vert x_{0}-\widehat{x}_{0}\Vert^{2}_0\mathrm{d}x_{0}\leq\sum_{i\notin\mathcal{I}(x_{t};C_{3})}\mathbb{P}\left(X_{0}\in B_{i}\mymid X_{t}=x_{t}\right)\max_{x_{0}\in B_{i}}\Vert x_{0}-\widehat{x}_{0}\Vert_{2}^{2}\\
 & \qquad\qquad\le\sum_{i\notin\mathcal{I}(x_{t};C_{3})}\left\{ \exp\left(-\frac{\overline{\alpha}_{t}}{16(1-\overline{\alpha_{t}})}\Vert x_{i(x_{i})}^{\star}-x_{i}^{\star}\Vert_{2}^{2}\right)\mathbb{P}(X_{0}\in B_{i})\cdot25\Vert x_{i(x_{i})}^{\star}-x_{i}^{\star}\Vert_{2}^{2}\right\} \\
 & \qquad\qquad\le\frac{400(1-\overline{\alpha}_{t})}{e\overline{\alpha}_{t}},
\end{align*}
where the last relation follows from the inequality $e^{-ax}x\le1/(e\cdot a)$
for $x\ge0.$ Combining both bounds yields
\begin{align}
 & \frac{\overline{\alpha}_{t}}{1-\overline{\alpha}_{t}}\bigg(\int p_{X_{0}|X_{t}}(x_{0}\mymid x)\Vert x_{0}-\widehat{x}_{0}\Vert_{2}^{2}\mathrm{d}x_{0}\bigg)\nonumber \\
& \quad =  \frac{\overline{\alpha}_{t}}{1-\overline{\alpha}_{t}}\bigg(\int_{\mathcal{X}_{t}(x_{t})}p_{X_{0}|X_{t}}(x_{0}\mymid x)\Vert x_{0}-\widehat{x}_{0}\Vert_{2}^{2}\mathrm{d}x_{0}+\int_{\mathcal{Y}_{t}(x_{t})}p_{X_{0}|X_{t}}(x_{0}\mymid x)\Vert x_{0}-\widehat{x}_{0}\Vert^{2}_2\mathrm{d}x_{0}\bigg)\nonumber \\
& \quad \le  \frac{\overline{\alpha}_{t}}{1-\overline{\alpha}_{t}}\left(25\frac{C_{3}k(1-\overline{\alpha}_{t})\log T}{\overline{\alpha}_{t}}+\frac{400(1-\overline{\alpha}_{t})}{e\overline{\alpha}_{t}}\right)\le26C_{3}k\log T.\label{eq:14.3}
\end{align}
The last relation holds as long as $C_{3}$ is sufficient large. 

Finally, putting \eqref{eq:14}, \eqref{eq:14.0} and \eqref{eq:14.3}
together, we obtain that for any $x\in\mathcal{T}_{t}$, 
\begin{alignat}{1}
 & \frac{p_{\sqrt{\alpha_{t}}X_{t-1}}(\phi_{t}(x))}{p_{X_{t}}(x)}/\frac{p_{\phi_{t}(Y_{t})}(\phi_{t}(x))}{p_{Y_{t}}(x)}\label{eq:19}\\
 & \quad\ge\exp\Bigg(\Delta-5(1-\alpha_{t})\left[\varepsilon_{\mathsf{score},t}(x)+\varepsilon_{\mathsf{Jacobi},t}(x)\right]-4\left\Vert \sqrt{\frac{1-\overline{\alpha}_{t}}{\alpha_{t}-\overline{\alpha}_{t}}}\frac{\partial\phi_{t}^{\star}(x)}{\partial x}-I\right\Vert _{\mathrm{F}}^{2}\Bigg)\nonumber \\
 & \quad\ge\exp\Bigg(-1664c_{1}^{2}C_{3}\frac{k\log^{3}T}{T^{2}}-5(1-\alpha_{t})\left[\varepsilon_{\mathsf{score},t}(x)+\varepsilon_{\mathsf{Jacobi},t}(x)\right]-4\left\Vert \sqrt{\frac{1-\overline{\alpha}_{t}}{\alpha_{t}-\overline{\alpha}_{t}}}\frac{\partial\phi_{t}^{\star}(x)}{\partial x}-I\right\Vert _{\mathrm{F}}^{2}\Bigg),\nonumber 
\end{alignat}
given that $T$ is sufficient large, and
\[
\frac{1-\alpha_{t}}{1-\overline{\alpha}_{t}}\varepsilon_{\mathsf{score},t}(x)\le1,\quad(1-\alpha_{t})\varepsilon_{\mathsf{Jacobi},t}(x)\le\frac{1}{8},\quad\text{and}\quad\Big\Vert\sqrt{\frac{1-\overline{\alpha}_{t}}{\alpha_{t}-\overline{\alpha}_{t}}}\frac{\partial\phi_{t}^{\star}(x)}{\partial x}-I\Big\Vert\le\frac{1}{8}.
\]

\subsection{Proof of Lemma \ref{lem:i2} \protect\label{PF=0000202}}

For $2\le i\le T,$ we observe that
\begin{align}
 & \left(\mathcal{E}_{i}^{\mathrm{c}}\cap(\cap_{t>i}^{T}\mathcal{E}_{t})\right)\subseteq\underbrace{\left(\left(\cap_{t>i}^{T}\mathcal{E}_{t}\right)\cap\left\{ y_{T}:S_{i-1}(y_{T})>1,y_{T}\in\mathcal{T}_{T}\right\} \right)}_{:=J_{1,i}}\cup\underbrace{\left(\left(\cap_{t>i}^{T}\mathcal{E}_{t}\right)\cap\{y_{T}:y_{i}\notin\mathcal{T}_{i}\big\}\right)}_{:=J_{2,i}}.\label{eq:i2-1}
\end{align}
Note that we define $\cap_{t>i}^{T}\mathcal{E}_{t}=\mathbb{R}^{d}$
when $i=T$. We will analyze $J_{1,i}$ and $J_{2,i}$ separately. 

We can bound $\mathcal{\mathbb{P}}\left(Y_{T}\in J_{1,i}\right)$
as follows:
\begin{align}
\sum_{i=2}^{T}\mathbb{E}\left[\mathds{1}\left\{ Y_{T}\in J_{1,i}\right\} \right] & \overset{(\text{a})}{\le}\sum_{i=2}^{T}\mathbb{E}\left[S_{i-1}(Y_{T})\mathds{1}\left\{ Y_{T}\in J_{1,i}\right\} \right]=\sum_{i=2}^{T}\sum_{t=i}^{T}\mathbb{E}\left[\xi_{t}(Y_{t})\mathds{1}\left\{ Y_{T}\in J_{1,i}\right\} \right]\nonumber \\
 & =\sum_{i=2}^{T}\sum_{t=i}^{T}\int\xi_{t}(y_{t})p_{Y_{t}}(y_{t})\mathds{1}\left\{ y_{T}\in J_{1,i}\right\} \mathrm{d}y_{t}\nonumber \\
 & \overset{(\text{b})}{\le}e^{2}\sum_{i=2}^{T}\sum_{t=i}^{T}\int\xi_{t}(y_{t})p_{X_{t}}(y_{t})\mathds{1}\left\{ y_{T}\in J_{1,i}\right\} \mathrm{d}y_{t}\nonumber \\
 & \le e^{2}\sum_{t=2}^{T}\int\xi_{t}(y_{t})p_{X_{t}}(y_{t})\sum_{i=2}^{T}\mathds{1}\left\{ y_{T}\in J_{1,i}\right\} \mathrm{d}y_{t}\nonumber \\
 & \overset{(\text{c})}{\le}e^{2}\sum_{t=2}^{T}\mathbb{E}\left[\xi_{t}(X_{t})\right].\label{eq:i2-2}
\end{align}
Here, step (a) holds since $S_{i-1}(Y_{T})>1$ when $Y_{T}\in J_{1,i}$;
step (b) utilizes \eqref{eq:l2} and Lemma \eqref{lem-T}; while step
(c) follows from the observation that $J_{1,i}$ are disjoint.

For $J_{2,i}$, applying \eqref{eq:l2} again, we have for $2\le i\le T-1,$
\begin{align}
\mathbb{E}\left[\mathds{1}\left\{ Y_{T}\in J_{2,i}\right\} \right] & =\int_{y_{i}:y_{T}\in\cap_{t>i}^{T}\mathcal{E}_{t}}p_{Y_{i}}(y_{i})\mathds{1}\left\{ y_{i}\notin\mathcal{T}_{i}\right\} \mathrm{d}y_{i}\nonumber \\
 & \le e^{2}\int_{y_{i}}p_{X_{i}}(y_{i})\mathrm{\mathds{1}\left\{ y_{i}\notin\mathcal{T}_{i}\right\} d}y_{i}=e^{2}\mathbb{P}\left(X_{i}\notin\mathcal{T}_{i}\right).\label{eq:i2-3}
\end{align}

Combining \eqref{eq:i2-1}, \eqref{eq:i2-2} and \eqref{eq:i2-3}
leads to
\begin{align*}
I_{2} & \le\sum_{i=2}^{T}\mathbb{E}\bigg[\mathds{1}\big\{ Y_{T}\in J_{1,i}\big\}\bigg]+\sum_{i=2}^{T-1}\mathbb{E}\left[\mathds{1}\left\{ Y_{T}\in J_{2,i}\right\} \right]+\mathbb{E}\left[\mathds{1}\left\{ Y_{T}\notin\mathcal{T}_{T}\right\} \right]\\
 & \le e^{2}\sum_{t=2}^{T}\mathbb{E}\left[\xi_{t}(X_{t})\right]+e^{2}\sum_{t=2}^{T-1}\mathbb{P}\left(X_{t}\notin\mathcal{T}_{t}\right)+\mathcal{\mathbb{P}}\left(Y_{T}\notin\mathcal{T}_{T}\right)
\end{align*}
as claimed.

\section{Technical lemmas}

This section presents a few technical lemmas used throughout the proof.

\begin{lemma}\label{lem:T1}
When $T$ is sufficiently large, for $1\leq t\leq T$, 
\[
\alpha_{t}\geq1-\frac{c_{1}\log T}{T}\geq\frac{1}{2}.
\]
Moreover, for $2\leq t\leq T$, 
\begin{align*}
\frac{1-\alpha_{t}}{1-\overline{\alpha}_{t}} & \leq\frac{1-\alpha_{t}}{\alpha_{t}-\overline{\alpha}_{t}}\leq\frac{8c_{1}\log T}{T}.
\end{align*}
In addition, there exists some sufficient large constant $c_{2}\ge1000$ such that
\[
\overline{\alpha}_{T}\le\frac{1}{T^{c_{2}}}.
\]
\end{lemma}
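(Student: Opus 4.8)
The plan is to verify the three claims by direct computation from the explicit schedule \eqref{eq:learning-rate}; throughout I abbreviate $\gamma := c_{1}\log T/T$, noting $\gamma\to 0$. For the first claim I would first observe that $\beta_{t}\le\gamma$ for every $t$: for $t\ge 2$ this is immediate from the factor $\min\{\cdot,1\}$, and for $t=1$ it holds because $\beta_{1}=T^{-c_{0}}\le\gamma$ once $T$ is large (using $c_{0}\ge 1$). Hence $\alpha_{t}=1-\beta_{t}\ge 1-\gamma$, and since $\gamma\le 1/2$ for $T$ large we also get $\alpha_{t}\ge 1/2$.

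For the second claim, the first inequality is purely algebraic: since $\alpha_{t}\le 1$ and $\overline{\alpha}_{t}=\alpha_{t}\overline{\alpha}_{t-1}$, we have $\alpha_{t}-\overline{\alpha}_{t}=\alpha_{t}(1-\overline{\alpha}_{t-1})\le 1-\alpha_{t}\overline{\alpha}_{t-1}=1-\overline{\alpha}_{t}$, with both quantities strictly positive because each $\alpha_{i}<1$. For the upper bound I would write $\frac{1-\alpha_{t}}{\alpha_{t}-\overline{\alpha}_{t}}=\frac{\beta_{t}}{\alpha_{t}(1-\overline{\alpha}_{t-1})}\le\frac{2\beta_{t}}{1-\overline{\alpha}_{t-1}}$ (using $\alpha_{t}\ge 1/2$) and then prove $\beta_{t}\le 4\gamma(1-\overline{\alpha}_{t-1})$. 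The auxiliary estimate is the lower bound $1-\overline{\alpha}_{t-1}\ge\tfrac12\min\{\sum_{i<t}\beta_{i},\,1\}$, which follows from $\overline{\alpha}_{t-1}\le\exp(-\sum_{i<t}\beta_{i})$ together with $1-e^{-x}\ge x/2$ on $[0,1]$ and $1-e^{-x}\ge 1/2$ for $x\ge 1$. I would then split on whether the $\min$ in \eqref{eq:learning-rate} is still inactive at step $t-1$: if $\beta_{1}(1+\gamma)^{t-2}\le 1$, a geometric-series identity gives $\sum_{i<t}\beta_{i}=\beta_{1}\big[(1+\gamma)^{t-1}-\gamma\big]\ge\tfrac12\beta_{1}(1+\gamma)^{t-1}\ge\tfrac12\beta_{t}/\gamma$, and since $\beta_{t}\le\gamma$ this yields $1-\overline{\alpha}_{t-1}\ge\tfrac14\beta_{t}/\gamma$; if instead the $\min$ has activated by step $t-1$, then $\beta_{t}=\gamma$ while the same identity up to the activation index shows $\sum_{i<t}\beta_{i}\ge 1/2$, hence $1-\overline{\alpha}_{t-1}\ge 1/4$. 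In both regimes $\beta_{t}/(1-\overline{\alpha}_{t-1})\le 4\gamma$, so $\frac{1-\alpha_{t}}{\alpha_{t}-\overline{\alpha}_{t}}\le 8\gamma=\frac{8c_{1}\log T}{T}$.

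For the third claim I would again use $\overline{\alpha}_{T}\le\exp(-\sum_{i=1}^{T}\beta_{i})$ and lower-bound $\sum_{i=1}^{T}\beta_{i}$. Since $\log(1+\gamma)\ge\gamma/2$, the condition $\beta_{1}(1+\gamma)^{i-1}\ge 1$ holds whenever $i-1\ge 2c_{0}T/c_{1}$; taking $c_{1}$ large relative to $c_{0}$, at least $T/2$ indices $i\le T$ satisfy it, and each contributes $\beta_{i}=\gamma$, so $\sum_{i=1}^{T}\beta_{i}\ge\tfrac{c_{1}}{2}\log T$. Choosing $c_{1}$ (and hence $c_{2}$) large enough then gives $\overline{\alpha}_{T}\le T^{-c_{2}}$ with $c_{2}\ge 1000$, and one may simultaneously arrange $c_{2}\gg 2c_{R}+1$ as required downstream. (In the alternative regime where the $\min$ never activates up to $T$, the geometric-series identity gives $\sum_{i=1}^{T}\beta_{i}\ge\tfrac12\beta_{1}(1+\gamma)^{T}\ge\tfrac12 T^{(3/4)c_{1}-c_{0}}$, which also exceeds $c_{2}\log T$ for large $T$ provided $c_{1}>\tfrac43 c_{0}$.)

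The main obstacle, modest as it is, lies in the second claim: producing a lower bound on $1-\overline{\alpha}_{t-1}$ that is uniform over all $2\le t\le T$ and simultaneously handles the geometric-growth phase of the schedule, the saturated phase after the $\min$ activates, and the degenerate edge cases $t=2,3$ where $\overline{\alpha}_{t-1}$ is a product of only one or two factors. Everything else is routine bookkeeping with the explicit schedule, and the computation parallels analogous lemmas in prior work (e.g.\ \citet{li2024adapting,li2023towards}).
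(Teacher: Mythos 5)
Your computation is correct and self-contained. Note that the paper does not actually prove this lemma: it simply cites \citet{li2024sharp} (Appendix A.2), so your write-up supplies the verification that the paper outsources. The key steps all check out: $\beta_t\le\gamma:=c_1\log T/T$ for every $t$ (including $t=1$ once $c_0\ge 1$); the identity $\alpha_t-\overline{\alpha}_t=\alpha_t(1-\overline{\alpha}_{t-1})$ reduces the middle claim to a lower bound on $1-\overline{\alpha}_{t-1}$; your two-regime analysis via $\overline{\alpha}_{t-1}\le\exp(-\sum_{i<t}\beta_i)$ and the geometric-sum identity $\sum_{i<t}\beta_i=\beta_1[(1+\gamma)^{t-1}-\gamma]$ correctly yields $\beta_t/(1-\overline{\alpha}_{t-1})\le 4\gamma$ in both the growth phase and the saturated phase, and it handles $t=2$ without trouble. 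The final claim also goes through with $c_1\gtrsim\max\{c_0,2000\}$. The only blemish is the closing parenthetical: in the regime where the $\min$ never activates up to time $T$ one has $\beta_1(1+\gamma)^{T-1}\le 1$ and hence $\sum_{i\le T}\beta_i\le 1+\gamma$, so your bound $\sum_{i\le T}\beta_i\ge\tfrac12 T^{(3/4)c_1-c_0}$ cannot hold there; what that computation really shows is that this regime is vacuous for $c_1>\tfrac43 c_0$ and large $T$ (the assumption contradicts $\beta_1(1+\gamma)^T\gg 1$). Since your main argument already covers everything once $c_1\ge 4c_0$, this is a presentational slip rather than a gap.
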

\begin{proof} See \citep[Appendix A.2]{li2024sharp}.\end{proof}

\begin{lemma}\label{lemma:T3}Suppose that $T$ is sufficiently large
and that $c_{2}\gg2c_{R}+1$. Then we have 
\[
\mathsf{TV}\left(p_{X_{T}},p_{Y_{T}}\right)\le T^{-99}.
\]
\end{lemma}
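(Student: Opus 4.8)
The plan is to exploit that, by \eqref{eq:forward-formula}, $X_{T}$ is a Gaussian location mixture, $p_{X_{T}}=\mathbb{E}_{X_{0}\sim p_{\mathsf{data}}}\big[\mathcal{N}(\sqrt{\overline{\alpha}_{T}}X_{0},(1-\overline{\alpha}_{T})I_{d})\big]$, whereas $Y_{T}\sim\mathcal{N}(0,I_{d})$, and that $\overline{\alpha}_{T}\le T^{-c_{2}}$ is super-polynomially small by Lemma~\ref{lem:T1}, so both the mean shift $\sqrt{\overline{\alpha}_{T}}X_{0}$ and the covariance gap $\overline{\alpha}_{T}I_{d}$ are negligible. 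First I would pass from the mixture to a single Gaussian comparison using the joint convexity of the Kullback--Leibler divergence, together with the almost sure bound $\|X_{0}\|_{2}\le T^{c_{R}}$ from Assumption~\ref{assumption:bounded}:
\[
\mathsf{KL}\big(p_{X_{T}}\,\big\|\,\mathcal{N}(0,I_{d})\big)\le\mathbb{E}_{X_{0}\sim p_{\mathsf{data}}}\!\Big[\mathsf{KL}\big(\mathcal{N}(\sqrt{\overline{\alpha}_{T}}X_{0},(1-\overline{\alpha}_{T})I_{d})\,\big\|\,\mathcal{N}(0,I_{d})\big)\Big]\le\sup_{\|x_{0}\|_{2}\le T^{c_{R}}}\mathsf{KL}\big(\mathcal{N}(\sqrt{\overline{\alpha}_{T}}x_{0},(1-\overline{\alpha}_{T})I_{d})\,\big\|\,\mathcal{N}(0,I_{d})\big).
\]

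Next I would evaluate the one remaining divergence in closed form,
\[
\mathsf{KL}\big(\mathcal{N}(\sqrt{\overline{\alpha}_{T}}x_{0},(1-\overline{\alpha}_{T})I_{d})\,\big\|\,\mathcal{N}(0,I_{d})\big)=\tfrac12\Big[\overline{\alpha}_{T}\|x_{0}\|_{2}^{2}+d\big(-\overline{\alpha}_{T}-\log(1-\overline{\alpha}_{T})\big)\Big],
\]
and control the two pieces separately. The mean term obeys $\overline{\alpha}_{T}\|x_{0}\|_{2}^{2}\le T^{-c_{2}}\cdot T^{2c_{R}}=T^{2c_{R}-c_{2}}$ by Lemma~\ref{lem:T1} and Assumption~\ref{assumption:bounded}; for the variance term, the elementary inequality $0\le-x-\log(1-x)\le x^{2}$ (valid for $0\le x\le\tfrac12$) together with $\overline{\alpha}_{T}\le T^{-c_{2}}\le\tfrac12$ gives $d\big(-\overline{\alpha}_{T}-\log(1-\overline{\alpha}_{T})\big)\le d\,\overline{\alpha}_{T}^{2}\le d\,T^{-2c_{2}}$. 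Applying Pinsker's inequality then yields
\[
\mathsf{TV}(p_{X_{T}},p_{Y_{T}})\le\sqrt{\tfrac12\,\mathsf{KL}\big(p_{X_{T}}\,\big\|\,\mathcal{N}(0,I_{d})\big)}\le\tfrac12\sqrt{T^{2c_{R}-c_{2}}+d\,T^{-2c_{2}}},
\]
and since $c_{2}\gg 2c_{R}+1$ (with $c_{2}\ge1000$ from Lemma~\ref{lem:T1}) and $d$ is at most polynomial in $T$, the right-hand side is below $T^{-99}$ for $T$ large, which is the claim.

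The only genuinely delicate point is the variance-mismatch contribution $d\big(-\overline{\alpha}_{T}-\log(1-\overline{\alpha}_{T})\big)\asymp d\,\overline{\alpha}_{T}^{2}$: unlike the other quantities it carries an explicit factor of the ambient dimension $d$, so the whole estimate rests on $\overline{\alpha}_{T}$ being polynomially small in $T$ with exponent comfortably exceeding the polynomial degree of $d$ in $T$ --- a regime that is already implicitly in force in this paper (e.g.\ it is exactly what condition \eqref{eq:eps-condition} presupposes), and which is secured by the aggressive schedule \eqref{eq:learning-rate} giving $\overline{\alpha}_{T}\le T^{-c_{2}}$ with $c_{2}=c_{2}(c_{0},c_{1})$ as large as desired. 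Everything else is routine Gaussian calculus.
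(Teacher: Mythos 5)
Your argument is correct, and it is worth noting that the paper itself supplies no proof of this lemma --- it simply defers to \citet[Lemma 3]{li2024sharp}. Your derivation (convexity of KL in the first argument to reduce the Gaussian mixture $p_{X_T}$ to a single worst-case component, the closed-form Gaussian KL, the elementary bound $-x-\log(1-x)\le x^2$, and Pinsker) is the standard route and is essentially the argument behind the cited external lemma, so in substance you have reconstructed the missing proof rather than found a new one. The one genuinely delicate point is the covariance-mismatch term $d\,\overline{\alpha}_T^2$, which you correctly isolate: it forces $d\le T^{2c_2-198}$, i.e.\ $d$ polynomial in $T$ of controlled degree. You are right that this is implicitly in force here (condition \eqref{eq:eps-condition} already requires $d\lesssim\mathrm{poly}(T)$, and $c_2$ can be taken as large as desired via $c_0,c_1$), but it deserves the explicit flag you give it, especially since the paper elsewhere advertises that it drops the $T\gtrsim d^2$ assumption of \citet{li2024sharp} --- any such relaxation cannot extend to $d$ super-polynomial in $T$ without a different argument for this lemma. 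Everything else checks out: the KL formula, the bound $\overline{\alpha}_T\|x_0\|_2^2\le T^{2c_R-c_2}$ from Assumption~\ref{assumption:bounded} and Lemma~\ref{lem:T1}, and the final comparison against $T^{-99}$ are all valid.
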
\begin{proof}See \citep[Lemma 3]{li2024sharp}.\end{proof} 

\begin{lemma}\label{lemma:logdet}For any matrix $A\in\mathbb{R}^{d\times d}$
satisfying $\Vert A\Vert\leq1/4$, we have
\[
\log\det(I+A)\geq\mathsf{Tr}(A)-2\Vert A\Vert_{\mathrm{F}}^{2}.
\]
\end{lemma}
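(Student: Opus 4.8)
The plan is to reduce the matrix inequality to an elementary power‑series estimate via the identity $\log\det(I+A)=\mathsf{Tr}\log(I+A)$. First I would record that $\Vert A\Vert\le 1/4$ forces every eigenvalue $\lambda_i(A)$ of $A$ to satisfy $|\lambda_i(A)|\le 1/4$, so $I+A$ is invertible; moreover, since $A$ is real its non‑real eigenvalues occur in complex‑conjugate pairs, so $\det(I+A)=\prod_i\bigl(1+\lambda_i(A)\bigr)$ is a product of the positive numbers $1+\lambda_i(A)>0$ (for real $\lambda_i$) and $|1+\lambda_i(A)|^2>0$ (for each conjugate pair), hence $\det(I+A)>0$ and $\log\det(I+A)$ is a well‑defined real number. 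Because $\Vert A\Vert<1$, the matrix logarithm $\log(I+A):=\sum_{k\ge1}\frac{(-1)^{k+1}}{k}A^k$ converges absolutely in operator norm and satisfies $\det(I+A)=\exp\bigl(\mathsf{Tr}\log(I+A)\bigr)$; taking the real logarithm of both sides yields
\[
\log\det(I+A)=\mathsf{Tr}(A)+\sum_{k\ge2}\frac{(-1)^{k+1}}{k}\,\mathsf{Tr}(A^k).
\]

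Next I would bound the tail. Combining Schur's inequality $\sum_i|\lambda_i(A)|^2\le\Vert A\Vert_{\mathrm{F}}^2$ with $|\lambda_i(A)|\le\Vert A\Vert$, for each $k\ge2$ we get
\[
\bigl|\mathsf{Tr}(A^k)\bigr|=\Bigl|\sum_i\lambda_i(A)^k\Bigr|\le\sum_i|\lambda_i(A)|^k\le\Vert A\Vert^{k-2}\sum_i|\lambda_i(A)|^2\le\Vert A\Vert^{k-2}\Vert A\Vert_{\mathrm{F}}^2 .
\]
Summing the resulting geometric series and using $\Vert A\Vert\le 1/4$,
\[
\Bigl|\log\det(I+A)-\mathsf{Tr}(A)\Bigr|\le\Vert A\Vert_{\mathrm{F}}^2\sum_{k\ge2}\frac{\Vert A\Vert^{k-2}}{k}\le\frac{\Vert A\Vert_{\mathrm{F}}^2}{1-\Vert A\Vert}\le\tfrac{4}{3}\Vert A\Vert_{\mathrm{F}}^2\le2\Vert A\Vert_{\mathrm{F}}^2 ,
\]
which gives $\log\det(I+A)\ge\mathsf{Tr}(A)-2\Vert A\Vert_{\mathrm{F}}^2$, as desired. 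An alternative route that sidesteps the matrix logarithm is to pass to Schur triangular form $A=UTU^{*}$, with $U$ unitary and $T$ upper triangular with diagonal $\lambda_1(A),\dots,\lambda_d(A)$, so that $\log\det(I+A)=\sum_i\log|1+\lambda_i(A)|$ while $\mathsf{Tr}(A)=\sum_i\lambda_i(A)$ is real; one then applies the scalar estimate $\bigl|\log(1+z)-z\bigr|\le\sum_{k\ge2}|z|^k=\frac{|z|^2}{1-|z|}\le2|z|^2$ (valid for $|z|\le1/4$), i.e.\ $\log|1+\lambda_i(A)|\ge\mathrm{Re}\,\lambda_i(A)-2|\lambda_i(A)|^2$, termwise, and finishes again with Schur's inequality.

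There is no genuinely hard step here; the lemma is a routine technical estimate. The only points that need a little care are (i) checking $\det(I+A)>0$ so that the left‑hand side is a bona fide real quantity, which relies on the conjugate‑pairing of the eigenvalues of a real matrix, and (ii) invoking the classical facts $\det(I+A)=\exp\bigl(\mathsf{Tr}\log(I+A)\bigr)$ and absolute convergence of the matrix‑logarithm series for $\Vert A\Vert<1$. Everything else is the geometric‑series bookkeeping displayed above.
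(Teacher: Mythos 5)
Your proof is correct, and it takes a genuinely different route from the paper. The paper symmetrizes first: it writes $\log\det(I+A)=\tfrac{1}{2}\log\det\bigl[(I+A)(I+A^{\top})\bigr]$, drops the positive semi-definite term $AA^{\top}$ to reduce to $\tfrac12\log\det(I+\Delta)$ with $\Delta=A+A^{\top}$ symmetric, and then works entirely with the \emph{real} eigenvalues of $\Delta$ via the scalar bound $\log(1+x)\ge x-x^{2}$, paying a factor through $\Vert\Delta\Vert_{\mathrm{F}}^{2}\le4\Vert A\Vert_{\mathrm{F}}^{2}$ (which is exactly where the constant $2$ in the lemma comes from). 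You instead keep $A$ non-symmetric and control the complex spectrum directly, either through the power series $\log\det(I+A)=\sum_{k\ge1}\tfrac{(-1)^{k+1}}{k}\mathsf{Tr}(A^{k})$ or through Schur triangularization, in both cases finishing with Schur's inequality $\sum_{i}|\lambda_{i}(A)|^{2}\le\Vert A\Vert_{\mathrm{F}}^{2}$. The trade-off: the paper's argument is more elementary (no matrix logarithm, no complex eigenvalues, no Schur's inequality for non-normal matrices) and sidesteps the positivity discussion for $\det(I+A)$ entirely, whereas your argument needs those classical facts but is sharper — you actually obtain the two-sided bound $|\log\det(I+A)-\mathsf{Tr}(A)|\le\tfrac{4}{3}\Vert A\Vert_{\mathrm{F}}^{2}$, of which the lemma's one-sided estimate with constant $2$ is a weakening. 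Both are perfectly adequate for the way the lemma is used in \eqref{eq:7.0}.
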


\begin{proof} Notice that $\det(I+A)=\det(I+A^{\top})$, we have
\begin{align}
\log\det(I+A) & =\frac{1}{2}\log\det(I+A)+\frac{1}{2}\log\det(I+A^{\top})=\frac{1}{2}\log\det\big[(I+A)(I+A^{\top})\big]\nonumber \\
 & =\frac{1}{2}\log\det(I+A+A^{\top}+AA^{\top})\geq\frac{1}{2}\log\det(I+\Delta),\label{eq:logdet-1}
\end{align}
where we define $\Delta\coloneqq A+A^{\top}$. Here the last relation
follows from the a consequence of the Weyl's inequality:
\[
I+A+A^{\top}+AA^{\top}\succeq I+A+A^{\top}\succeq\frac{1}{2}I.
\]
Since $I+\Delta$ is a positive semi-definite matrix, we have
\begin{align}
\log\det(I+\Delta) & =\sum_{i=1}^{d}\log\lambda_{i}(I+\Delta)=\sum_{i=1}^{d}\log[1+\lambda_{i}(\Delta)]\overset{\text{(i)}}{\geq}\sum_{i=1}^{d}\lambda_{i}(\Delta)-\lambda_{i}^{2}(\Delta)\nonumber \\
 & \overset{\text{(ii)}}{=}\mathsf{Tr}(\Delta)-\Vert\Delta\Vert_{\mathrm{F}}^{2}\overset{\text{(iii)}}{\geq}2\mathsf{Tr}(A)-4\Vert A\Vert_{\mathrm{F}}^{2}\label{eq:logdet-2}
\end{align}
Here step (i) follows from the fact that $\log(1+x)\geq x-x^{2}$
for $x\geq0$; step (ii) holds since $\mathsf{Tr}(\Delta)=\sum_{i=1}^{d}\lambda_{i}(\Delta)$
and $\Vert\Delta\Vert_{\mathrm{F}}^{2}=\sum_{i=1}^{d}\lambda_{i}^{2}(\Delta)$;
step (iii) holds since 
\[
\Vert\Delta\Vert_{\mathrm{F}}^{2}=\Vert A\Vert_{\mathrm{F}}^{2}+\Vert A^{\top}\Vert_{\mathrm{F}}^{2}+2\langle A,A^{\top}\rangle\leq4\Vert A\Vert_{\mathrm{F}}^{2}.
\]
Taking (\ref{eq:logdet-1}) and (\ref{eq:logdet-2}) collectively
yields the desired result.
\end{proof}

\begin{lemma}\label{lemma:T4}Suppose that $T$ is sufficiently large.
Then there exists some universal constant $C_{6}>0$ such that
\[
\sum_{t=2}^{T}\mathbb{E}\left(\left\Vert \sqrt{\frac{1-\overline{\alpha}_{t}}{\alpha_{t}-\overline{\alpha}_{t}}}\frac{\partial\phi_{t}^{\star}(X_{t})}{\partial x}-I\right\Vert _{\mathrm{F}}^{2}\right)\le C_{6}\frac{k\log^{2}T}{T}.
\]
\end{lemma}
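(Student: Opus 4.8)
The natural starting point is the closed form~\eqref{eq:8}. Abbreviate $M_t(x):=\sqrt{(1-\overline{\alpha}_t)/(\alpha_t-\overline{\alpha}_t)}\,\partial\phi_t^{\star}(x)/\partial x-I$ and $A_t:=\sqrt{(1-\overline{\alpha}_t)/(\alpha_t-\overline{\alpha}_t)}-1\ge 0$; then~\eqref{eq:8} reads $M_t(x)=A_t\,\Sigma_t(x)$, where $\Sigma_t(x):=\tfrac{\overline{\alpha}_t}{1-\overline{\alpha}_t}\bigl(\int p_{X_0\mid X_t}(x_0\mid x)x_0x_0^{\top}\mathrm{d}x_0-\widehat{x}_0\widehat{x}_0^{\top}\bigr)$ is $\tfrac{\overline{\alpha}_t}{1-\overline{\alpha}_t}$ times the conditional covariance of $X_0$ given $X_t=x$; in particular $\Sigma_t(x)\succeq 0$, hence $M_t(x)\succeq 0$. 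From $\sqrt{1+z}-1\le z/2$ with $z=(1-\alpha_t)/(\alpha_t-\overline{\alpha}_t)$ and Lemma~\ref{lem:T1} we get $0\le A_t\le\tfrac12\cdot\tfrac{1-\alpha_t}{\alpha_t-\overline{\alpha}_t}\le\tfrac{4c_1\log T}{T}$, so $A_t^2\le\tfrac{4c_1\log T}{T}\cdot\tfrac{1-\alpha_t}{\alpha_t-\overline{\alpha}_t}$, and $\sum_{t=2}^{T}\tfrac{1-\alpha_t}{\alpha_t-\overline{\alpha}_t}=O(\log T)$ (a standard schedule estimate: with $u_t:=1-\overline{\alpha}_t$ and $\overline{\alpha}_{t-1}-\overline{\alpha}_t=\overline{\alpha}_{t-1}(1-\alpha_t)$ the sum is comparable to $\int_{u_1}^{u_T}\mathrm{d}u/\{u(1-u)\}$, which is $O(\log T)$ since $u_1=T^{-c_0}$ and $1-u_T=\overline{\alpha}_T\le T^{-c_2}$).

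Next I would split the sum according to the typical set $\mathcal{T}_t$. On $\mathcal{T}_t$, since $M_t(x)$ is PSD one has $\|M_t(x)\|_{\mathrm{F}}^2\le\|M_t(x)\|\cdot\mathsf{Tr}\,M_t(x)=A_t^2\,\|\Sigma_t(x)\|\cdot\mathsf{Tr}\,\Sigma_t(x)$. The trace factor is precisely the object bounded in~\eqref{eq:14.3}, namely $\mathsf{Tr}\,\Sigma_t(x)=\tfrac{\overline{\alpha}_t}{1-\overline{\alpha}_t}\int p_{X_0\mid X_t}(x_0\mid x)\|x_0-\widehat{x}_0\|_2^2\,\mathrm{d}x_0\le 26C_3 k\log T$ for $x\in\mathcal{T}_t$ — this is where the intrinsic dimension enters. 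For the operator norm I would re-run the covering argument underlying~\eqref{eq:14.3}: for $x\in\mathcal{T}_t$ the posterior $p_{X_0\mid X_t}(\cdot\mid x)$ carries all but exponentially small mass on $\mathcal{X}_t(x)=\cup_{i\in\mathcal{I}(x;C_3)}\mathcal{B}_i$, every point of which lies within $O(\sqrt{k(1-\overline{\alpha}_t)\log T/\overline{\alpha}_t})$ of $x_{i(x)}^{\star}$, and the residual mass is absorbed through $\mathbb{P}(X_0\in\mathcal{B}_i\mid X_t=x)\le e^{-\frac{\overline{\alpha}_t}{16(1-\overline{\alpha}_t)}\|x_{i(x)}^{\star}-x_i^{\star}\|_2^2}\mathbb{P}(X_0\in\mathcal{B}_i)$ combined with $e^{-az}z\le 1/(ea)$, exactly as in the proof of Lemma~\ref{lem:1}. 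Feeding the resulting pointwise bounds into $\sum_{t=2}^{T}\mathbb{E}[\|M_t(X_t)\|_{\mathrm{F}}^2\mathds{1}\{X_t\in\mathcal{T}_t\}]\lesssim\sum_{t=2}^{T}A_t^2\,\mathbb{E}[\|\Sigma_t(X_t)\|\cdot\mathsf{Tr}\,\Sigma_t(X_t)]$, then using $A_t^2\le\tfrac{4c_1\log T}{T}\tfrac{1-\alpha_t}{\alpha_t-\overline{\alpha}_t}$ together with the bounds just described and $\sum_t\tfrac{1-\alpha_t}{\alpha_t-\overline{\alpha}_t}=O(\log T)$, the whole contribution collapses to order $k\log^{2}T/T$ — provided the operator-norm factor is handled with enough care, which is the one genuinely delicate point (see the final remark).

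For the complement, Assumption~\ref{assumption:bounded} gives the crude bound $\mathsf{Tr}\,\Sigma_t(x)\le\tfrac{\overline{\alpha}_t}{1-\overline{\alpha}_t}\,\mathbb{E}[\|X_0\|_2^2\mid X_t=x]\le T^{2c_R}/(1-\overline{\alpha}_1)\le T^{2c_R+c_0}$, whence $\|M_t(x)\|_{\mathrm{F}}^2\le(\mathsf{Tr}\,M_t(x))^2\le T^{2(2c_R+c_0)}$; with $\mathbb{P}(X_t\notin\mathcal{T}_t)\le e^{-\frac{C_1}{4}k\log T}$ from Lemma~\ref{lem:0} and $C_1$ large, this piece of the sum is at most $T\cdot T^{2(2c_R+c_0)}e^{-\frac{C_1}{4}k\log T}\le T^{-10}$. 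Adding the two pieces yields the claim.

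The step I expect to be the real obstacle is the joint bookkeeping of the factors $\|\Sigma_t(x)\|$ and $\mathsf{Tr}\,\Sigma_t(x)$ on $\mathcal{T}_t$. The crude choice $\|\Sigma_t(x)\|\le\mathsf{Tr}\,\Sigma_t(x)\le 26C_3k\log T$ would cost an extra factor of $k$ (and of $\log T$) against the target $k\log^{2}T/T$; reaching the stated bound requires extracting from the geometry of $\mathcal{T}_t$ a sharper input — a (near) dimension-free control of $\|\Sigma_t(x)\|$, or equivalently the sharper in-expectation bound $\mathbb{E}[\mathsf{Tr}\,\Sigma_t(X_t)]\lesssim k$ — and then arranging the summation over $t$ so that only a single power of $k$ and two powers of $\log T$ survive. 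Everything else reduces to bookkeeping with Lemmas~\ref{lem:T1} and~\ref{lem:0} and with estimates already established in the proof of Lemma~\ref{lem:1}.
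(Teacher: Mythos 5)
There is a genuine gap, and you have in fact correctly diagnosed where it is: your argument does not reach the stated bound, and the ingredient you flag as missing is exactly the one the paper supplies from elsewhere. Your reduction $M_t(x)=A_t\Sigma_t(x)$ with $A_t=\sqrt{(1-\overline{\alpha}_t)/(\alpha_t-\overline{\alpha}_t)}-1\le\frac{1}{2}\cdot\frac{1-\alpha_t}{\alpha_t-\overline{\alpha}_t}$ coincides with steps (a)--(b) of the paper's proof, and your handling of the atypical set is harmless. But the route you then propose --- pointwise control of $\mathsf{Tr}\,\Sigma_t(x)\lesssim k\log T$ on $\mathcal{T}_t$ via the covering argument from Lemma~\ref{lem:1}, fed into $\|M_t\|_{\mathrm{F}}^2\le\|M_t\|\cdot\mathsf{Tr}\,M_t$ --- cannot do better than $\sum_t A_t^2\,(k\log T)^2\lesssim k^2\log^4 T/T$, since $\sum_t A_t^2\lesssim\log^2 T/T$. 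Squaring a pointwise trace bound is intrinsically too lossy here; no rearrangement of the summation over $t$ recovers the missing factor of $k\log^2 T$.

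What the paper actually does at this point is invoke an external result, \citep[Lemma 18]{li2024d}, which gives the \emph{in-expectation, summed-over-$t$} bound $\sum_{t=2}^{T}\frac{1-\alpha_t}{1-\overline{\alpha}_t}\,\mathbb{E}\bigl[\mathsf{Tr}\bigl(\Sigma_{\overline{\alpha}_t}(X_t)^2\bigr)\bigr]\lesssim k\log T$ for the conditional covariance $\Sigma_{\overline{\alpha}_t}(x)=\mathrm{Cov}(Z\mid\sqrt{\overline{\alpha}_t}X_0+\sqrt{1-\overline{\alpha}_t}Z=x)$. Combined with $A_t^2\lesssim\frac{\log T}{T}\cdot\frac{1-\alpha_t}{1-\overline{\alpha}_t}$ (your own estimate, up to the innocuous factor $(1+8c_1\log T/T)$), this immediately yields $C_6 k\log^2 T/T$. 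That lemma is not a consequence of the typical-set geometry you cite; it is a separate global statement about the total conditional-variance flow along the forward process, proved in \citet{li2024d} by different means. So the proof is not complete as written: you would need to either import that result explicitly or reprove it, and the latter is a substantive task rather than bookkeeping.
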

\begin{proof}Define a matrix function $\Sigma_{\overline{\alpha}_{t}}(\cdot)$
as 
\[
\Sigma_{\overline{\alpha}_{t}}(x):=\text{Cov}\left(Z\mymid\sqrt{\overline{\alpha}_{t}}X_{0}+\sqrt{1-\overline{\alpha}_{t}}Z=X_{t}\right),
\]
where $Z\sim\mathcal{N}(0,I_{d})$ is independent of $X_{0}$. We
have
\begin{align*}
\sum_{t=2}^{T}\mathbb{E}\left(\left\Vert \sqrt{\frac{1-\overline{\alpha}_{t}}{\alpha_{t}-\overline{\alpha}_{t}}}\frac{\partial\phi_{t}^{\star}(X_{t})}{\partial x}-I\right\Vert _{\mathrm{F}}^{2}\right) & \overset{\text{(a)}}{=}\sum_{t=2}^{T}\bigg(\sqrt{1+\frac{1-\alpha_{t}}{\alpha_{t}-\overline{\alpha}_{t}}}-1\bigg)^{2}\mathbb{E}\left[\left\Vert \Sigma_{\overline{\alpha}_{t}}(X_{t})\right\Vert _{\mathrm{F}}^{2}\right]\\
 & \overset{\text{(b)}}{\le}\bigg(\frac{1-\alpha_{t}}{2\left(\alpha_{t}-\overline{\alpha}_{t}\right)}\bigg)^{2}\sum_{t=2}^{T}\mathbb{E}\left[\left\Vert \Sigma_{\overline{\alpha}_{t}}(X_{t})\right\Vert _{\mathrm{F}}^{2}\right]\\
 & \overset{\text{(c)}}{\le}\frac{2c_{1}\log T}{T}\left(1+\frac{8c_{1}\log T}{T}\right)\sum_{t=2}^{T}\frac{1-\alpha_{t}}{1-\overline{\alpha}_{t}}\text{Tr}\left(\mathbb{E}\left[\left(\Sigma_{\overline{\alpha}_{t}}(X_{t})\right)^{2}\right]\right)\\
 & \overset{\text{(d)}}{\le}\frac{C_{6}k\log^{2}T}{T}
\end{align*}
for some sufficient large constant $C_{6}>0$. Here, step (a) follows
from \eqref{eq:8}; step (b) utilizes the inequality $\sqrt{1+x}\le1+\frac{1}{2}x$
for $x\ge-1$; step (c) uses Lemma \ref{lem:T1} and the fact that
$\Sigma_{\overline{\alpha}_{t}}(x)$ is symmetric; while step (d)
follows from \citep[Lemma 18]{li2024d} provided that $T$ is sufficient
large.

\end{proof} 
\bibliographystyle{apalike}
\bibliography{reference-diffusion}

\end{document}